\documentclass[sigconf]{acmart}
\AtBeginDocument{%
  }

\setcopyright{acmlicensed}

\copyrightyear{2026}
\acmYear{2026}
\setcopyright{cc}
\setcctype{by}
\acmConference[KDD '26]{Proceedings of the 32nd ACM SIGKDD Conference on Knowledge Discovery and Data Mining V.2}{August 09--13, 2026}{Jeju Island, Republic of Korea}
\acmBooktitle{Proceedings of the 32nd ACM SIGKDD Conference on Knowledge Discovery and Data Mining V.2 (KDD '26), August 09--13, 2026, Jeju Island, Republic of Korea}
\acmDOI{10.1145/3770855.3817877}
\acmISBN{979-8-4007-2259-2/2026/08}

\usepackage{booktabs}
\usepackage{tabularx}
\usepackage{colortbl}
\usepackage{arydshln} 
\usepackage{multirow}
\usepackage{graphicx}
\usepackage{subcaption}
\usepackage{amsmath}
\usepackage{balance}

\begin{document}

%%
%% The "title" command has an optional parameter,
%% allowing the author to define a "short title" to be used in page headers.
\title{Physically-Constrained Mamba-SDE for Remaining Useful Life Prediction under Irregular Observations}

%%
%% The "author" command and its associated commands are used to define
%% the authors and their affiliations.
%% Of note is the shared affiliation of the first two authors, and the
%% "authornote" and "authornotemark" commands
%% used to denote shared contribution to the research.

\author{Deyu Zhuang}
\affiliation{%
  \institution{Nanjing University of Aeronautics and Astronautics}
  \city{Nanjing}
  \country{China}}
\email{deyu.zhuang@nuaa.edu.cn}

\author{Peiliang Gong}
\authornote{Corresponding Author}
% \affiliation{%
%   \institution{Singapore University of Technology and Design}
%   \country{Singapore}
% }
\affiliation{%
  \institution{Nanyang Technological University}
  \country{Singapore}
}
\email{plgong@outlook.com}

\author{Yang Shao}
\affiliation{%
 \institution{Nanjing University of Aeronautics and Astronautics}
  \city{Nanjing}
  \country{China}}
\email{shaoyang@nuaa.edu.cn}

\author{Liyuan Shu}
\affiliation{%
 \institution{Nanjing University of Aeronautics and Astronautics}
  \city{Nanjing}
  \country{China}}
\email{lyshu1106@nuaa.edu.cn}

\author{Qi Zhu}
\affiliation{%
 \institution{Nanjing University of Aeronautics and Astronautics}
  \city{Nanjing}
  \country{China}}
\email{zhuqi@nuaa.edu.cn}

\author{Xiaoli Li}
\affiliation{%
  \institution{Singapore University of Technology and Design}
  \country{Singapore}}
\email{xiaoli_li@sutd.edu.sg}

\author{Daoqiang Zhang}
% \authornote{Corresponding Author}
\authornotemark[1]
\affiliation{%
 \institution{Nanjing University of Aeronautics and Astronautics}
  \city{Nanjing}
  \country{China}}
\email{dqzhang@nuaa.edu.cn}

%%
%% By default, the full list of authors will be used in the page
%% headers. Often, this list is too long, and will overlap
%% other information printed in the page headers. This command allows
%% the author to define a more concise list
%% of authors' names for this purpose.
\renewcommand{\shortauthors}{Zhuang et al.}

%%
%% The abstract is a short summary of the work to be presented in the
%% article.
\begin{abstract}
Accurate Remaining Useful Life prediction is critical for industrial predictive maintenance. However, real-world deployment is challenging due to the irregular nature of sensor observations, characterized by asynchronous sampling, burst missingness, and temporal jitter. Compounding this issue, purely data-driven models often generate physically implausible degradation trajectories that violate the irreversible nature of damage accumulation.
To address this, we propose PC-MambaSDE, a unified continuous-time framework for robust RUL prediction under irregular observations. Specifically, we design a Mask-Aware Continuous Mamba Encoder that explicitly leverages observation masks to extract context-rich control signals. Furthermore, we introduce a Physics-Guided Latent SDE with parametrically rectified hybrid drift, superimposing a global physical bias to enforce monotonic degradation even amid severe observation gaps. Additionally, we formulate RUL prediction as a boundary value problem via a Terminal Degradation Penalty, which decouples a Health Index dimension and applies a penalty loss to guide trajectories toward the failure state. Theoretically, we prove that our variational objective is mathematically equivalent to minimizing the KL divergence via Girsanov’s theorem, and we guarantee the global asymptotic stability of the learned dynamics through Lyapunov analysis.
To enable rigorous evaluation, we develop a Hybrid Irregularity Generation Scheme that simulates realistic industrial imperfections. Extensive experiments on public benchmarks demonstrate that PC-MambaSDE significantly outperforms state-of-the-art methods, particularly under extreme observation scarcity, validating the efficacy of embedding physical priors into continuous-time latent dynamics. Our code is available at \url{https://github.com/KylinToeFish/PC-MambaSDE}.
\end{abstract}

%%
%% The code below is generated by the tool at http://dl.acm.org/ccs.cfm.
%% Please copy and paste the code instead of the example below.
%%

\begin{CCSXML}
<ccs2012>
   <concept>
       <concept_id>10010405.10010432.10010439</concept_id>
       <concept_desc>Applied computing~Engineering</concept_desc>
       <concept_significance>500</concept_significance>
       </concept>
   <concept>
       <concept_id>10010147.10010257.10010293.10010294</concept_id>
       <concept_desc>Computing methodologies~Neural networks</concept_desc>
       <concept_significance>500</concept_significance>
       </concept>
 </ccs2012>
\end{CCSXML}

\ccsdesc[500]{Applied computing~Engineering}
\ccsdesc[300]{Computing methodologies~Neural networks}

%%
%% Keywords. The author(s) should pick words that accurately describe
%% the work being presented. Separate the keywords with commas.
\keywords{Remaining Useful Life Prediction, Sparse and Irregular Data, Stochastic Differential Equations, Mamba, Physical Constraints}
%% A "teaser" image appears between the author and affiliation
%% information and the body of the document, and typically spans the
%% page.

% \received{20 February 2007}
% \received[revised]{12 March 2009}
% \received[accepted]{5 June 2009}

%%
%% This command processes the author and affiliation and title
%% information and builds the first part of the formatted document.
\maketitle

\section{Introduction}

\begin{figure}[t]
  \centering
  \includegraphics[width=1.0\linewidth, trim=4cm 6cm 3cm 6cm, clip]{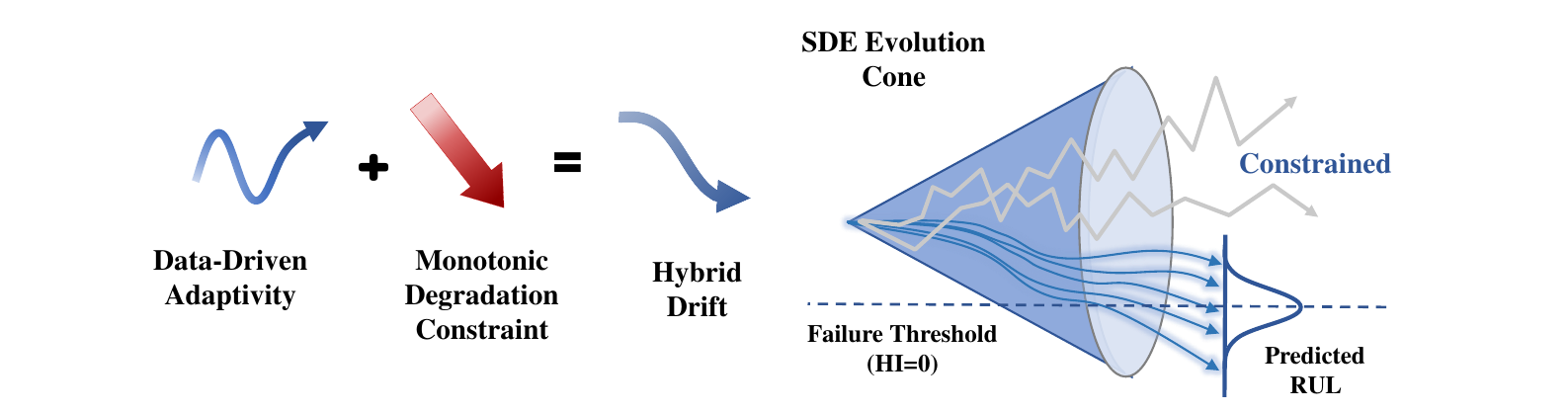}
  \caption{Generative Mechanism of PC-MambaSDE via Parametrically Rectified Hybrid Drift.}
  \label{fig:drift}
\end{figure}

Predictive Maintenance (PdM) has emerged as a cornerstone of modern industrial reliability, enabling proactive management of high-value assets such as aircraft engines and manufacturing systems \cite{ref1, ref2}. Central to PdM is the estimation of Remaining Useful Life (RUL), a problem typically cast as a multivariate time series regression task where the objective is to map historical sensor trajectories to a time-to-failure target \cite{ref3, ref4}. Driven by the availability of large-scale benchmarks like C-MAPSS, data-driven approaches have evolved rapidly. Early works utilized Convolutional Neural Networks (CNNs) to capture local temporal features \cite{ref5, ref6}, while Recurrent Neural Networks (RNNs), including LSTMs and GRUs, became the standard for modeling long-term temporal dependencies \cite{ref7, ref8, ref9}. More recently, attention-based mechanisms have been explored to further enhance interpretability and performance in dense data regimes \cite{ref10, ref11}.

However, real-world industrial deployment is often hampered by sparse and irregular time series, resulting from sensor malfunctions, discrete sampling, or network latency \cite{ref12, ref13}. Traditional sequential models, which assume fixed sampling intervals, exhibit substantial performance degradation under such conditions  \cite{ref14}. Conventional remedies employ imputation or interpolation techniques—such as cubic splines, Gaussian processes, or functional principal component analysis (FPCA)—to realign data onto regular temporal grids \cite{ref15, ref16, ref17}. However, as demonstrated by Wang et al. \cite{ref18}, such two-stage ``impute-then-predict'' pipelines often introduce significant bias and amplify uncertainty, particularly when degradation signals are highly scarce. Similarly, Cheng et al. \cite{ref19} proposed a parameterized static regression framework that mitigates data scarcity through historical posterior rectification. While these methods effectively address certain impacts of sparsity, they remain discrete or static in nature, failing to capture the continuous-time evolution intrinsic to physical degradation processes.

To overcome the inherent limitations of discrete-time architectures, the machine learning community has increasingly adopted Neural Differential Equations. By formulating latent state evolution as a continuous-time process, models such as Neural Controlled Differential Equations (Neural CDEs) \cite{ref20} and Latent Stochastic Differential Equations (Latent SDEs) \cite{ref21, ref22} naturally accommodate sporadic observations without ad-hoc interpolation. This paradigm has enabled sophisticated architectures including GRU-ODE-Bayes \cite{ref23} and Neural Continuous-Discrete State Space Models (SSMs) \cite{ref24}, offering theoretically principled frameworks for irregular temporal dynamics. While continuous-time modeling has emerged as a dominant approach for general irregular data processing, its application to RUL prediction remains relatively unexplored and confronts domain-specific challenges.

Standard Latent SDEs operate as ``black-box'' generative models lacking the physical constraints inherent to mechanical degradation. Specifically, they fail to guarantee the monotonicity of the degradation process or ensure that the trajectory converges strictly to a failure state at end-of-life \cite{ref25}. Furthermore, purely data-driven neural drift functions exhibit spurious oscillations or unphysical stagnation in data-sparse regions, violating the fundamental irreversibility of damage accumulation. Although Park et al. \cite{ref26} recently proposed ACSSM to approximate posterior path measures via amortized control, ensuring stochastic trajectories rigorously adhere to physical endpoint constraints without computationally prohibitive approximations remains an open problem. Additionally, effective latent dynamics necessitate robust context encoding from irregular observations. While modern SSMs such as Mamba \cite{ref27} and Structured State Space sequence model (S4) \cite{ref28} excel at sequence modeling, their standard implementations do not explicitly exploit the informative missingness patterns characteristic of PdM data \cite{ref29}, limiting their capacity to generate precise control signals for downstream differential equation solvers.

To address these critical gaps, we propose \textbf{\underline{P}}hysically-\textbf{\underline{C}}onstrained \textbf{\underline{Mamba-SDE}} (\textbf{PC-MambaSDE}), a unified continuous-time framework for robust RUL prediction under sparse and irregular conditions. 
Specifically, we design a Mask-Aware Continuous Mamba Encoder that explicitly fuses observation masks with sensor values within the selective state space mechanism, extracting continuous, context-rich control signals to precisely guide SDE evolution. As illustrated in Figure \ref{fig:drift}, we develop a Parametrically Rectified Hybrid Drift mechanism that superimposes a global, learnable physical bias onto local neural vector fields, guaranteeing monotonic degradation even in observation-sparse regions. Furthermore, we introduce a Terminal Degradation Penalty that explicitly decouples a "Health Index" (HI) dimension within the latent space. By penalizing terminal deviations, we guide stochastic processes to converge physically to failure states, bridging data-driven generation and physical degradation laws.

We rigorously evaluate our method on widely-adopted benchmarks using a novel Hybrid Irregularity Generation Scheme (HIGS) that systematically simulates realistic industrial imperfections including asynchronous dropout, structural burst missingness, temporal jitter, and state-dependent noise. Extensive experiments demonstrate that PC-MambaSDE significantly outperforms state-of-the-art baselines, particularly in highly scarce data scenarios, validating the effectiveness of integrating physical constraints with continuous-time stochastic modeling.
The main contributions of this paper can be summarized as follows:
\begin{itemize}
    \item We propose PC-MambaSDE, a unified continuous-time framework designed for RUL prediction under sparse and irregular observations. By integrating state space models with stochastic differential equations, our approach effectively bridges the gap between discrete, data-driven representation learning and continuous physical degradation laws.
    \item We design a Mask-Aware Continuous Mamba Encoder to extract continuous, context-rich control signals by explicitly fusing observation masks with time-series data. Furthermore, we propose a Physics-Guided Latent SDE with a Parametrically Rectified Hybrid Drift and a Terminal Degradation Penalty, which enforces monotonic degradation trends and strictly constrains trajectories toward the failure boundary.
    \item We formulate a multidimensional irregularity benchmark that simulates realistic industrial imperfections, such as asynchronous dropout and burst missingness. Extensive experiments on C-MAPSS and N-CMAPSS datasets demonstrate that our model achieves superior robustness and accuracy compared to state-of-the-art baselines, particularly in scenarios with extreme data scarcity.
    \item We provide a rigorous theoretical analysis of the proposed framework. We prove that minimizing the control energy is mathematically equivalent to maximizing the Evidence Lower Bound via Girsanov’s theorem, and we strictly establish the global asymptotic stability of the basis-decomposed drift using Lyapunov theory. 
\end{itemize}

\section{Related Work}

\subsection{Data-Driven RUL Prediction}
In the past decade, data-driven Prognostics and Health Management has shifted from traditional feature engineering to deep learning architectures. CNNs were among the first to effectively extract local temporal features from multivariate sensor data. Babu et al. \cite{ref5} introduced a deep CNN for RUL estimation, and subsequent works like Li et al. \cite{ref6} and Yang et al. \cite{ref30} further improved performance by using time-windowed approaches and double-convolution structures to capture multi-scale patterns. To address the sequential nature of degradation, RNN architectures, particularly LSTMs, became the dominant paradigm. Zheng et al. \cite{ref7} and Heimes \cite{ref3} demonstrated the superiority of LSTMs in capturing long-term dependencies in the C-MAPSS benchmark.
More recently, attention mechanisms have been introduced to weight critical time steps. Chen et al. \cite{ref10} proposed an attention-based deep learning framework, while Qin et al. \cite{ref11} and Xu et al. \cite{ref31} integrated feature-temporal attention to better focus on sensitive features. However, these methods typically assume that sensor data is dense and regularly sampled. When applied to sparse or irregular industrial data, they often require preprocessing steps like interpolation, which can introduce significant bias \cite{ref18}. Unlike these discrete-time approaches, our PC-MambaSDE operates in continuous time, inherently accommodating irregularity without reconstruction.

\subsection{Modeling Irregular Time Series}
Handling irregular sampling is a fundamental challenge in time series analysis. Classic approaches rely on imputation methods, such as spline interpolation \cite{ref15} or FPCA \cite{ref16, ref17}. However, Wang et al. \cite{ref18} argued in their work on Sparse MFMLP that such two-stage ``impute-then-learn'' strategies are suboptimal for scarce data. Similarly, Cheng et al. \cite{ref19} recently proposed a parameterized static regression framework to bypass interpolation by rectifying predictions with historical estimates. While these methods mitigate sparsity, they remain static or discrete, failing to model the underlying continuous dynamics of the system.
A more rigorous solution lies in Neural Differential Equations. Neural ODEs \cite{ref32} and Neural CDEs \cite{ref20} treat the hidden state as a continuous function of time, naturally handling arbitrary time intervals. For stochastic systems, Latent SDEs \cite{ref21, ref22} extend this by incorporating Brownian motion to model uncertainty. Advanced frameworks like GRU-ODE-Bayes \cite{ref23} and Neural Continuous-Discrete SSMs \cite{ref24} have shown great performance in medical and climate domains. Recently, Park et al. \cite{ref26} introduced the ACSSM, employing amortized control and Doob’s $h$-transform to approximate posterior path measures for irregular time series. While ACSSM excels in sequence modeling, it lacks the monotonic constraints essential for fatigue degradation. We address this by integrating a physics-constrained terminal penalty into SDE dynamics.

\subsection{State Space Models and Generative Models}
State Space Models have seen a resurgence as efficient alternatives to Transformers for long sequence modeling. Gu et al. \cite{ref28} introduced the S4, and subsequently Mamba \cite{ref27}, which utilizes a selective scanning mechanism to achieve linear complexity. These models are theoretically grounded in continuous systems, making them suitable encoders for differential equations. Nevertheless, standard Mamba implementations are designed for continuous data streams and lack explicit mechanisms to handle the informative missingness characteristic of PdM data \cite{ref29}. In terms of generation, while Generative Adversarial Networks have been explored for time-series regeneration \cite{ref33}, they often lack the mathematical tractability of SDEs. By proposing a Mask-Aware Continuous Mamba Encoder, we combine the efficiency of modern SSMs with the irregularity-robustness of SDEs, creating a unified framework that is both computationally efficient and physically consistent.

\section{Hybrid Irregularity Generation Scheme}
\label{sec:higs}

To evaluate the robustness of RUL prediction models under realistic industrial conditions, we propose the Hybrid Irregularity Generation Scheme. Real-world sensor networks rarely provide ideal data; they suffer from packet loss, clock desynchronization, and noise. HIGS simulates these complex stochasticities through a four-stage injection process, establishing a benchmark that reflects the true entropy of industrial environments.

Let $\mathbf{X} \in \mathbb{R}^{L \times D}$ be the pristine sensor measurements and $\mathbf{t} \in \mathbb{R}^L$ be the normalized timestamps. The HIGS transform $\mathcal{T}_{\text{HIGS}}(\mathbf{X}, \mathbf{t})$ applies the following perturbations:

\paragraph{1. Asynchronous Sensor Dropout ($p_{drop}$):}
To simulate random packet loss in IoT transmission, we apply an independent Bernoulli mask to each sensor channel. Let $m_{i,j} \sim \text{Bernoulli}(1 - p_{\text{drop}})$ be the availability mask for the $j$-th sensor at step $i$. The observed value becomes $\tilde{x}_{i,j} = x_{i,j} \odot m_{i,j}$.

\paragraph{2. Structural Burst Missingness ($\lambda_{burst}, \mu_{len}$):}
Systematic failures often cause continuous segments of data loss. We model this via a Poisson process where we sample the number of burst events $k \sim \text{Poisson}(\lambda_{\text{burst}})$ and their durations $l \sim \mathcal{N}(\mu_{\text{len}}, \sigma_{\text{len}})$. For each event starting at random index $t_{\text{start}}$, we enforce $\mathbf{m}_{t_{\text{start}} : t_{\text{start}}+l} = \mathbf{0}$, creating structural voids in the timeline.

\paragraph{3. Temporal Jitter ($\sigma_{jitter}$):}
Industrial loggers often suffer from clock synchronization errors. We perturb the timestamps using Gaussian noise: $\tilde{t}_i = t_i + \epsilon_i$, where $\epsilon_i \sim \mathcal{N}(0, \sigma_{\text{jitter}}^2)$. Crucially, to preserve physical causality, we enforce a monotonicity constraint $\tilde{t}_i = \max(\tilde{t}_i, \tilde{t}_{i-1} + \delta_{\min})$.

\begin{figure*}[t]
  \centering
  \includegraphics[width=0.92 \textwidth, trim=7cm 0cm 6cm 0cm, clip]{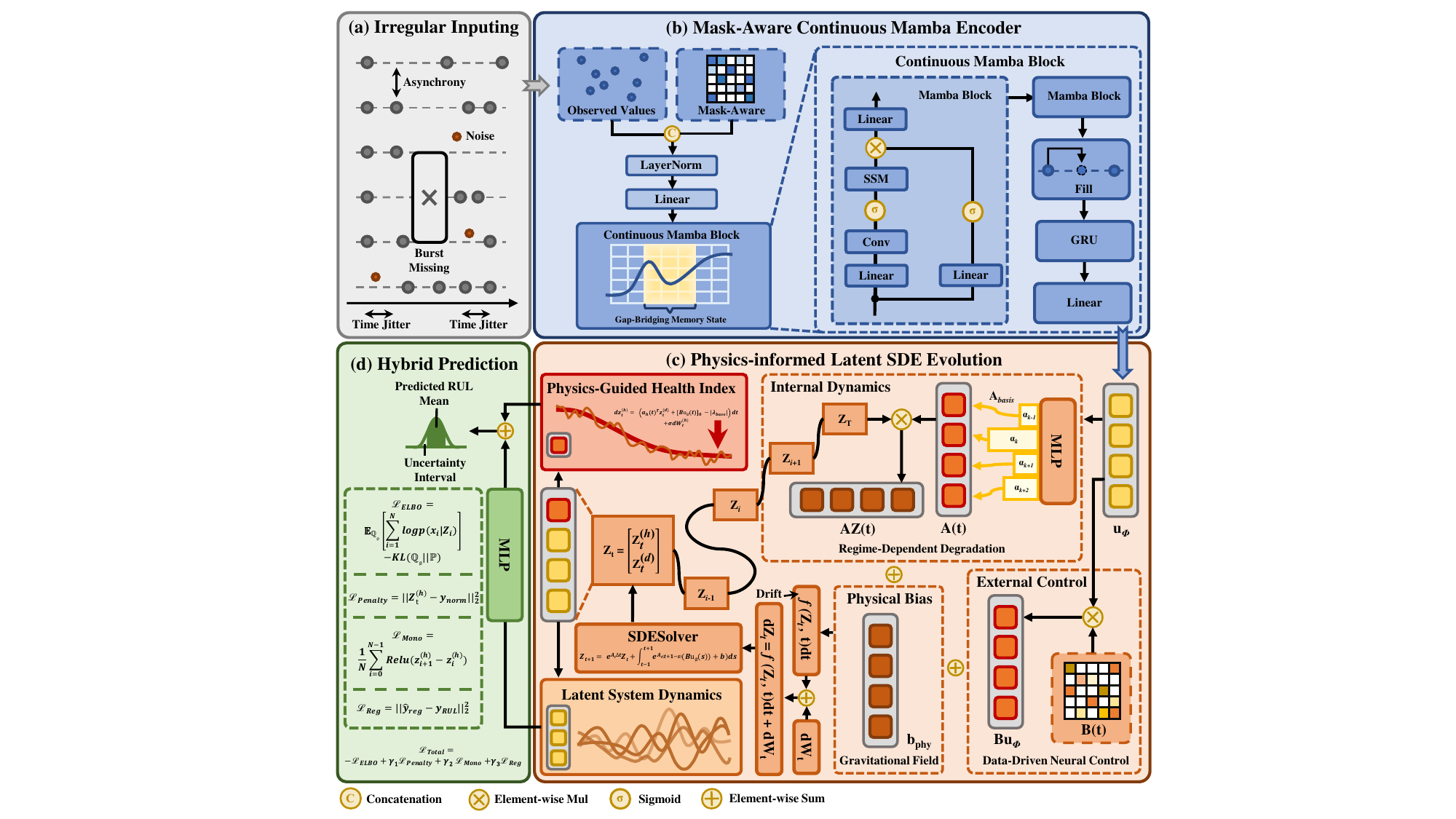} 
  
  \caption{The framework of PC-MambaSDE.(a) Irregular Input Processing: Handles industrial sensor data with asynchronous sampling, burst missingness, and temporal jitter.(b) Mask-Aware Continuous Mamba Encoder: Extracts continuous control signals $u_{\phi}(t)$ from sparse sequences by explicitly fusing validity masks.(c) Physics-Guided Latent SDE: Governs state evolution via a hybrid drift, superimposing a global physical bias $b_{phy}$ onto local neural vector fields.(d) Hybrid Prediction: Jointly optimizes RUL distributions using ELBO, a terminal penalty, and monotonicity regularization.}
  \label{fig:framework}
\end{figure*}

\paragraph{4. State-Dependent Noise Injection ($\alpha_{noise}$):}
Recognizing that sensor noise often correlates with system degradation, we introduce a state-dependent noise term. The noise variance scales dynamically with the current HI:
\begin{equation}
\mathbf{x}_{\text{final}} = \tilde{\mathbf{x}} + \xi, \quad \xi \sim \mathcal{N}\left(0, \sigma_{\text{base}}^2 \left(1 + \alpha_{\text{noise}} \cdot (1 - \text{HI}_t)\right)\right)
\end{equation}
where $\alpha_{\text{noise}}$ controls the sensitivity of noise to the degradation severity. This creates a challenging ``signal-to-noise ratio decay'' scenario typical of late-stage machinery failure.

\section{Methodology}

\subsection{Problem Formulation: Stochastic
Degradation with Irregular Observations}
Let the health state of a system be represented by a continuous-time stochastic process $\mathbf{Z}_t \in \mathbb{R}^d$ defined over a finite horizon $t \in [0, T]$. The dynamics are governed by a generic It\^o SDE:
\begin{equation}
\mathrm{d}\mathbf{Z}_t = \mathbf{f}(\mathbf{Z}_t, t) \mathrm{d}t + \sigma \mathrm{d}\mathbf{W}_t, \quad \mathbf{Z}_0 \sim p(\mathbf{Z}_0)
\end{equation}
where $\mathbf{f}: \mathbb{R}^d \times \mathbb{R}^+ \to \mathbb{R}^d$ is the drift function governing the degradation trend, and $\mathbf{W}_t$ is a standard Wiener process representing intrinsic stochasticity.

In industrial scenarios, we observe a sequence of discrete, irregularly sampled measurements $\mathcal{D} = \{(\tau_i, \mathbf{x}_{\tau_i}, \mathbf{m}_{\tau_i})\}_{i=0}^N$, where $\tau_i$ are timestamps, $\mathbf{x}_{\tau_i}$ are sensor readings, and $\mathbf{m}_{\tau_i}$ is the observation mask. The observations are emitted from the latent state via a linear map $\mathbf{H}$ subject to Gaussian noise:
\begin{equation}
\mathbf{x}_{\tau_i} = \mathbf{H}\mathbf{Z}_{\tau_i} + \boldsymbol{\epsilon}_i, \quad \boldsymbol{\epsilon}_i \sim \mathcal{N}(\mathbf{0}, \Sigma_{\text{obs}})
\end{equation}
Our goal is to infer the posterior path measure $\mathbb{Q}(\mathbf{Z}_{[0,T]} | \mathcal{D})$ and predict RUL $y_{RUL}$ based on the terminal state $\mathbf{Z}_T$.

\subsection{Overview}
We propose PC-MambaSDE, a unified continuous-time framework that bridges discrete, irregular observations with continuous physical degradation processes. As shown in Figure \ref{fig:framework}, our architecture comprises three synergistic components. First, a Mask-Aware Continuous Mamba Encoder processes irregular inputs—sensor values, observation masks, and temporal intervals—to generate continuous-time control signals that explicitly exploit sparsity patterns. Second, a Physics-Guided Latent SDE governs state evolution via a Parametrically Rectified Hybrid Drift, superimposing a global physical degradation bias onto local neural vector fields to guarantee monotonic degradation while adapting to system-specific dynamics. Third, a Terminal Degradation Penalty reformulates RUL prediction as a boundary value problem, decoupling a HI dimension and enforcing endpoint constraints that compel trajectories to traverse from healthy states to failure boundaries. Together, these components integrate the expressive power of SSMs, the rigor of SDEs, and domain-informed physical constraints. Theoretically, our framework is rigorously grounded: we prove the equivalence of our objective to the ELBO via Girsanov’s theorem and guarantee global stability through Lyapunov analysis, with the detailed derivation provided in \textbf{Appendix~\ref{app:proof}}. We detail each component below.

\subsection{Mask-Aware Mamba Encoder Architecture}

Standard Transformers treat time series as ordered sequences with fixed intervals, often struggling with the asynchronous and sparse nature of industrial sensor data. To address this, we design a hierarchical encoder that seamlessly integrates the efficient long-sequence modeling of Mamba with a robust mechanism for handling data gaps. Unlike quadratic-complexity attention mechanisms, our Mask-Aware Mamba Encoder operates with linear complexity, which we empirically validate in \textbf{Appendix~\ref{app:time}}. The architecture consists of three key stages: Mask-Aware Embedding, Mamba Encoding, and Recurrent Latent Filling.

\subsubsection{Augmented Embedding}
We explicitly couple the sensor readings $\mathbf{x}_{\tau_i}$ with their corresponding validity masks $\mathbf{m}_{\tau_i}$ to prevent the model from confusing ``missing data'' with ``zero values.'' These concatenated features are projected into a high-dimensional latent space $\mathbf{X}_{emb} \in \mathbb{R}^{B \times L \times D_{model}}$ via a learnable linear projection:
\begin{equation}
\mathbf{X}_{emb} = \text{LayerNorm}(\text{Linear}_{in}([\mathbf{x}_{\tau_i} \oplus \mathbf{m}_{\tau_i}]))
\end{equation}
This embedding strategy ensures that the sparsity pattern serves as an explicit feature, initializing the sequence modeling with awareness of observation validity.

\subsubsection{Mamba Sequence Modeling}
We denote the stacked Mamba operation as $\Phi_{\text{Mamba}}$, which processes the augmented embedding to yield the preliminary latent sequence $\mathbf{H}_{seq}$:
\begin{equation}
\mathbf{H}_{seq} = \Phi_{\text{Mamba}}(\mathbf{X}_{emb})
\end{equation}

\subsubsection{Latent Filling and Recurrent Smoothing}
To strictly handle the irregularity and stabilize the control signal for the downstream SDE, we introduce a Latent Forward-Filling mechanism followed by a Recurrent Smoothing layer.

\paragraph{Latent Forward-Filling:} Recognizing that the degradation state of a mechanical system cannot change abruptly during signal loss, we apply the sample-and-hold strategy. For any time step $t$ with missing data, we propagate the latent state from the last valid observation. Let $\tau(t) = \max \{k \le t \mid \text{valid observation exists at } k\}$ be the index of the most recent valid observation. The filled latent state $\mathbf{H}_{fill}$ is defined as:
\begin{equation}
\mathbf{H}_{fill}[t] = \mathbf{H}_{seq}[\tau(t)]
\end{equation}

\paragraph{Recurrent Smoothing:} Finally, to smooth out local discontinuities introduced by the filling operation and to refine the trajectory dynamics, we process the filled sequence through a Gated Recurrent Unit (GRU). This layer acts as a dynamic filter, generating the final continuous control signal $\mathbf{u}_\phi(t)$ required to guide the SDE drift:
\begin{equation}
\mathbf{u}_\phi(t) = \text{GRU}(\mathbf{H}_{fill})
\end{equation}
The resulting $\mathbf{u}_\phi(t)$ encodes a robust, continuous-time representation of the system's degradation state, effectively enabling the subsequent SDE solver to generate monotonic and physics-compliant trajectories even under extreme data sparsity.

\subsection{Physics-Guided Latent SDE Dynamics}

Purely data-driven Neural SDEs are often ``black boxes'' that produce physically implausible trajectories. Furthermore, learning a high-dimensional drift matrix $\mathbf{A}$ is notoriously unstable. To mitigate these issues, we introduce Latent Decoupling to enforce interpretability and Basis Decomposition to guarantee dynamic stability.

\subsubsection{Decoupled State Space}
Unlike standard approaches that entangle all latent dimensions, we impose a semantic structure on the state space. We partition the latent state $\mathbf{Z}_t$ into a scalar HI $z^{(h)}_t$ and a hidden dynamic vector $\mathbf{z}^{(d)}_t$:
\begin{equation}
\mathbf{Z}_t = \begin{bmatrix} z^{(h)}_t \\ \mathbf{z}^{(d)}_t \end{bmatrix} \in \mathbb{R}^{1 + (d-1)}
\end{equation}
The dimension $z^{(h)}_t$ is explicitly designated to track the cumulative damage, serving as a virtual sensor for the unobservable degradation status. The remaining dimensions $\mathbf{z}^{(d)}_t$ capture complex, non-monotonic system variations.

\subsubsection{Basis-Decomposed Drift Construction}
Directly learning the drift matrix $\mathbf{A}$ often leads to eigenvalues with positive real parts, causing the SDE integration to diverge over long time horizons. To ensure Lyapunov stability, we decompose the drift matrix into a weighted sum of pre-conditioned stable basis matrices. The encoder output $\mathbf{u}_\phi$ dynamically predicts the mixing coefficients $\boldsymbol{\alpha}_k$:
\begin{equation}
\boldsymbol{\alpha} = \text{Softmax}(\text{Linear}_{coeff}(\mathbf{u}_\phi)) \in \mathbb{R}^K, \quad \sum_{k=1}^K \alpha_k = 1
\end{equation}
The basis matrices $\mathbf{A}_{basis}^{(k)}$ are parameterized to be strictly negative-definite using an exponential constraint. This guarantees that the real parts of the eigenvalues remain negative, forcing the system to return to equilibrium in the absence of external forcing:
\begin{equation}
\mathbf{A}_{basis}^{(k)} = -(\exp(\mathbf{D}_{basis}^{(k)}) + \epsilon \mathbf{I}), \quad \epsilon=10^{-6}
\end{equation}
The time-varying neural drift matrix $\mathbf{A}_{neural}(t)$ is then synthesized as a convex combination of these bases, allowing the model to switch between different dynamic regimes smoothly:
\begin{equation}
\mathbf{A}_{neural}(t) = \sum_{k=1}^{K} \alpha_k(t) \cdot \mathbf{A}_{basis}^{(k)}
\end{equation}

To guarantee that this construction prevents the stochastic process from diverging in observation-sparse regions, we establish the following theorem:

\begin{theorem}[Asymptotic Stability of Convex Combinations]
If a set of matrices $\{A_k\}_{k=1}^K$ share a common quadratic Lyapunov function and are Hurwitz (stable), then their convex combination $A(t) = \sum_{k=1}^K \alpha_k(t) A_k$ is also Hurwitz.
\end{theorem}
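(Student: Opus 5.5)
The plan is to use the common quadratic Lyapunov function directly, since being Hurwitz for a real matrix is equivalent to admitting a quadratic Lyapunov certificate.

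First, fix the hypothesis concretely. We read ``share a common quadratic Lyapunov function'' as the existence of a symmetric positive definite matrix $P \succ 0$ such that
\begin{equation*}
A_k^\top P + P A_k \prec 0, \qquad k = 1,\dots,K.
\end{equation*}
Equivalently, $V(z) = z^\top P z$ strictly decreases along every trajectory of $\dot z = A_k z$. Set $Q_k := -(A_k^\top P + P A_k) \succ 0$.

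Second, fix any time $t$ and write $\alpha_k = \alpha_k(t) \ge 0$ with $\sum_k \alpha_k = 1$; this is exactly what the Softmax guarantees. The Lyapunov expression is linear in the matrix, so
\begin{equation*}
A(t)^\top P + P A(t) = \sum_{k=1}^K \alpha_k \bigl(A_k^\top P + P A_k\bigr) = -\sum_{k=1}^K \alpha_k Q_k .
\end{equation*}
This is a convex combination of negative definite matrices. Negative definite matrices form a convex cone: for $z \neq 0$ we have $z^\top(\sum_k \alpha_k Q_k) z = \sum_k \alpha_k z^\top Q_k z > 0$, because at least one $\alpha_k$ is positive. Hence $A(t)^\top P + P A(t) \prec 0$.

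Third, conclude that $A(t)$ is Hurwitz by the standard Lyapunov eigenvalue argument. Let $A(t) v = \lambda v$ with $v \in \mathbb{C}^d \setminus \{0\}$. Then
\begin{equation*}
v^* \bigl(A(t)^\top P + P A(t)\bigr) v = 2\,\mathrm{Re}(\lambda)\, v^* P v .
\end{equation*}
The left side is negative and $v^* P v > 0$, so $\mathrm{Re}(\lambda) < 0$. This proves the statement pointwise in $t$.

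The main point needing care is interpretation rather than technique. The theorem concerns Hurwitzness of each frozen matrix $A(t)$; it does not directly say that the time-varying system $\dot z = A(t) z$ is stable, and frozen-time Hurwitzness alone would not imply that. The common $P$ gives more, and I would add it as a remark since it is what the surrounding text actually needs. Let $c = \min_k \lambda_{\min}(Q_k) / \lambda_{\max}(P) > 0$. Along solutions of the time-varying system,
\begin{equation*}
\dot V \le -\min_k \lambda_{\min}(Q_k)\,\|z\|^2 \le -c\,V ,
\end{equation*}
which gives uniform exponential stability of $\dot z = A(t) z$ for arbitrary measurable switching $\alpha(t)$.

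One can also check the hypothesis for the paper's bases. Take $\mathbf{D}^{(k)}_{basis}$ to be diagonal, so that $\exp(\cdot)$ is elementwise-positive on the diagonal and each $\mathbf{A}^{(k)}_{basis}$ is diagonal negative definite. Then $P = I$ serves as the common Lyapunov matrix. If $\mathbf{D}^{(k)}_{basis}$ were a general matrix with the matrix exponential, this choice of $P$ would not follow automatically, and the hypothesis would genuinely need to be assumed.
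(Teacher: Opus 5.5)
Your proof is correct, and unlike the paper's argument it actually uses the stated hypothesis.

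\textbf{How the paper argues.} The paper never works with an abstract common Lyapunov matrix. It fixes $V(Z)=Z^\top Z$ (i.e.\ $P=I$) and tries to derive the hypothesis from the parameterization $A_k=-(\exp(D_k)+\epsilon I)$, asserting that $\exp(D_k)$ is positive semidefinite for any real $D_k$. From this it obtains $v^\top A_k v\le-\epsilon\|v\|^2$. It then goes directly to $\dot V=\sum_k\alpha_k(t)\,Z^\top(A_k^\top+A_k)Z<0$ along the time-varying system, and concludes global asymptotic stability rather than frozen-time Hurwitzness.

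\textbf{How you argue.} You keep a general $P\succ0$ and use linearity of $A\mapsto A^\top P+PA$ together with convexity of the negative definite cone. You get Hurwitzness from the eigenvector identity. You then correctly separate out the time-varying claim, which needs the common $P$ and yields the uniform rate $c$.

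\textbf{What each route buys.} The paper's route, where valid, is self-certifying and addresses the time-varying dynamics the SDE actually needs; its bound even gives $\dot V\le-2\epsilon V$ uniformly in $\alpha(t)$. Yours proves the abstract statement for arbitrary $P$ and isolates exactly where the architecture enters.

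That isolation matters, because the paper's key step fails for general $D_k$. In two dimensions, the skew-symmetric generator of rotation by $\pi$ gives $\exp(D_k)=-I$, so $A_k=(1-\epsilon)I$, which is not even Hurwitz. Your closing caveat is therefore well-founded, if anything understated. $P=I$ is a genuine common Lyapunov matrix when the $D_k$ are diagonal, as you assume, or more generally symmetric, so that $\exp(D_k)\succ0$.
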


\begin{proof}
We parameterize each basis matrix as $A_k = -(\exp(D_k) + \epsilon I)$. Let $v$ be any non-zero vector. The quadratic form is:
\begin{equation}
    v^T A_k v = -v^T \exp(D_k) v - \epsilon v^T v
\end{equation}
Since $\exp(D_k)$ is positive semi-definite (matrix exponential of real parameters) and $\epsilon > 0$, we have:
\begin{equation}
    v^T A_k v \le -\epsilon \|v\|^2 < 0
\end{equation}
Thus, each basis matrix $A_k$ is strictly negative definite. Now consider the time-varying system drift $A(t) = \sum \alpha_k(t) A_k$, where $\alpha_k(t) \ge 0$ and $\sum \alpha_k = 1$ (Output of Softmax). The derivative of the Lyapunov candidate $V(Z) = Z^T Z$ along the trajectory is:
\begin{equation}
    \dot{V}(Z) = Z^T (A(t)^T + A(t)) Z = \sum_{k=1}^K \alpha_k(t) Z^T (A_k^T + A_k) Z
\end{equation}
Since $Z^T A_k Z < 0$ for all $k$, the weighted sum is strictly negative:
\begin{equation}
    \dot{V}(Z) < 0, \quad \forall Z \neq 0
\end{equation}
This implies global asymptotic stability for the homogeneous part of the SDE. The system will inherently revert to the equilibrium (or the path dictated by $b_{phy}$) in the absence of control $u_\phi$, preventing explosion in sparse data regions.
\end{proof}

\subsubsection{Parametric Drift Rectification}
Standard Neural SDEs are agnostic to the ``arrow of time'' in degradation physics. To enforce the irreversibility of damage, we propose a Parametric Drift Rectification mechanism. We formulate the posterior drift function $\mathbf{f}_\phi(\mathbf{Z}_t, t)$ as the superposition of three distinct forces: the internal system dynamics, the external neural control, and a constant physical field:
\begin{equation}
\mathrm{d}\mathbf{Z}_t = \left( \mathbf{A}_{neural}(t) \mathbf{Z}_t + \mathbf{B} \mathbf{u}_\phi(t) + \mathbf{b}_{\text{phy}} \right) \mathrm{d}t + \sigma \mathrm{d}\mathbf{W}_t
\end{equation}
The term $\mathbf{b}_{\text{phy}}$ serves as a static gravitational field in the latent space. We structurally enforce a negative bias on the HI dimension while leaving the dynamic dimensions unbiased. This is derived from the Arrhenius principle, where degradation naturally accumulates:
\begin{equation}
\mathbf{b}_{\text{phy}} = \left[ -|\lambda_{\text{base}}|, \mathbf{0}^\top \right]^\top, \quad \lambda_{\text{base}} \in \mathbb{R}^+
\end{equation}
By expanding Equation (13) for the HI dimension $z^{(h)}_t$, we can explicitly observe the rectification effect:
\begin{equation}
\mathrm{d}z^{(h)}_t = \left( \mathbf{a}_{h}(t)^\top \mathbf{z}^{(d)}_t + [\mathbf{B}\mathbf{u}_\phi(t)]_0 - |\lambda_{\text{base}}| \right) \mathrm{d}t + \sigma \mathrm{d}W^{(h)}_t
\end{equation}
Here, $-|\lambda_{\text{base}}|$ provides a persistent downward momentum, guaranteeing monotonic degradation even in observation-sparse regions, with the detailed derivation provided in  \textbf{Appendix~\ref{app:proof:Penalty}}. The neural terms $\mathbf{a}_{h}(t)^\top \mathbf{z}^{(d)}_t$ and $[\mathbf{B}\mathbf{u}_\phi(t)]_0$ model local fluctuations and load-dependent variations, but the base degradation rate strictly prevents the model from learning ``self-healing'' behaviors commonly seen in unconstrained regression models.

\subsection{Variational Objective via Girsanov Theorem}

We adopt a training strategy based on the principle of minimal intervention. The model should rely on the known physical prior ($\mathbf{b}_{\text{phy}}$) as much as possible, utilizing the neural control $\mathbf{u}_\phi$ only when necessary to explain data anomalies.

We train the model to maximize the ELBO. Let $\mathbb{P}$ be the measure of the uncontrolled prior process and $\mathbb{Q}_\phi$ be the measure of the controlled posterior.
\begin{equation}
\mathcal{L}_{\text{ELBO}} = \mathbb{E}_{\mathbb{Q}_\phi} \left[ \sum_{i=1}^N \log p(\mathbf{x}_{\tau_i} | \mathbf{Z}_{\tau_i}) \right] - \text{KL}(\mathbb{Q}_\phi || \mathbb{P})
\end{equation}
The KL divergence is analytically derived using the Girsanov Theorem, which quantifies the energy required to steer the diffusion process. It manifests as the quadratic norm of the control signal difference between the posterior and the prior:
\begin{equation}
\text{KL}(\mathbb{Q}_\phi || \mathbb{P}) = \mathbb{E}_{\mathbb{Q}_\phi} \left[ \int_0^T \frac{1}{2\sigma^2} || \mathbf{f}_\phi(\mathbf{Z}_t, t) - \mathbf{f}_{\text{prior}}(\mathbf{Z}_t) ||_2^2 \mathrm{d}t \right]
\end{equation}

Substituting our linear parameterization, minimizing the magnitude of the neural control $\mathbf{u}_\phi$ is mathematically equivalent to minimizing the KL divergence between the posterior and the physical prior, as explicitly derived via Girsanov’s Theorem. To ground this relationship with rigorous theoretical guarantees, we state and derive the following theorem:

\begin{theorem}[Girsanov Theorem for Diffusion Processes]
Let $W_t$ be a Brownian motion under measure $\mathbb{P}$. Let $u(t)$ be an adapted process satisfying Novikov's condition. Define a new measure $\mathbb{Q}$ via the Radon-Nikodym derivative:
\begin{equation}
    \frac{d\mathbb{Q}}{d\mathbb{P}} = \exp\left( \int_{0}^{T} u(t)^T dW_t - \frac{1}{2} \int_{0}^{T} \|u(t)\|^2 dt \right)
\end{equation}
Then, under $\mathbb{Q}$, the process $\tilde{W}_t = W_t - \int_{0}^{t} u(s) ds$ is a standard Brownian motion.
\end{theorem}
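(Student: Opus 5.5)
The plan is the standard Lévy-characterization proof. Let $Z_t = \exp\bigl(\int_0^t u(s)^T dW_s - \tfrac12\int_0^t \|u(s)\|^2 ds\bigr)$ be the stochastic exponential. By Itô's formula, $dZ_t = Z_t\, u(t)^T dW_t$, so $Z$ is a nonnegative local martingale and hence a supermartingale.

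\textbf{Step 1: $Z$ is a true martingale.} Novikov's condition $\mathbb{E}_{\mathbb{P}}[\exp(\tfrac12\int_0^T\|u\|^2dt)]<\infty$ is exactly what is needed to get $\mathbb{E}_{\mathbb{P}}[Z_T]=1$, so that $\mathbb{Q}$ is a probability measure equivalent to $\mathbb{P}$. This is the main obstacle. I would invoke Novikov's criterion as a known result rather than reprove it. If a self-contained argument is wanted, I would first show $\mathbb{E}[Z_T^{(\lambda)}]=1$ for the exponential of $\lambda u$ with $\lambda<1$, using Hölder's inequality and uniform integrability. I would then let $\lambda\uparrow 1$, using the Novikov bound to pass to the limit.

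\textbf{Step 2: the conditional-expectation rule.} Set $Z_t=\mathbb{E}_{\mathbb{P}}[Z_T\mid\mathcal{F}_t]$. The abstract Bayes formula then gives
\begin{equation}
\mathbb{E}_{\mathbb{Q}}[X\mid\mathcal{F}_s]=\frac{\mathbb{E}_{\mathbb{P}}[XZ_t\mid\mathcal{F}_s]}{Z_s}
\end{equation}
for $\mathcal{F}_t$-measurable $X$. Consequently, a process $M$ is a $\mathbb{Q}$-local martingale if and only if $MZ$ is a $\mathbb{P}$-local martingale.

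\textbf{Step 3: apply Itô's product rule to $\tilde W^i Z$.} For each component $i$,
\begin{equation}
d(\tilde W^i_t Z_t)=\tilde W^i_t\,dZ_t+Z_t\,dW^i_t-Z_t u_i(t)\,dt+d\langle W^i,Z\rangle_t .
\end{equation}
The covariation term is $d\langle W^i,Z\rangle_t=Z_t u_i(t)\,dt$, which exactly cancels the drift $-Z_t u_i(t)\,dt$. What remains is a sum of stochastic integrals, so $\tilde W^i Z$ is a $\mathbb{P}$-local martingale. By Step 2, $\tilde W^i$ is a continuous $\mathbb{Q}$-local martingale.

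\textbf{Step 4: conclude by Lévy's characterization.} Quadratic covariation is unchanged under an equivalent change of measure, and a finite-variation drift contributes nothing to it. Hence $\langle\tilde W^i,\tilde W^j\rangle_t=\langle W^i,W^j\rangle_t=\delta_{ij}t$ holds $\mathbb{Q}$-a.s. Lévy's characterization theorem then shows that $\tilde W$ is a standard $\mathbb{Q}$-Brownian motion on $[0,T]$.
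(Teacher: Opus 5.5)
The paper gives no proof of this statement. It quotes Girsanov's theorem as a classical result, and the \emph{Derivation of the Objective} paragraph that follows applies it to compute $\mathrm{KL}(\mathbb{Q}_\phi\,\|\,\mathbb{P})$; it does not prove it. Your argument is the standard textbook route, and it is correct: Novikov makes the stochastic exponential a true martingale; the abstract Bayes formula follows; It\^o's product rule shows $\tilde W^i Z$ is a $\mathbb{P}$-local martingale, because the covariation $Z_t u_i(t)\,dt$ cancels the drift; and L\'evy's characterization finishes, since quadratic covariation is unchanged under $\mathbb{Q}\ll\mathbb{P}$.

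Two small refinements. First, Novikov is sufficient, not ``exactly what is needed'', for $\mathbb{E}_{\mathbb{P}}[Z_T]=1$. Second, in your self-contained sketch the limit $\lambda\uparrow 1$ is not a convergence-theorem step. It comes from the H\"older bound $1=\mathbb{E}[Z^{(\lambda)}_T]\le \mathbb{E}[Z_T]^{\lambda^2}\,\mathbb{E}[e^{\frac{\lambda}{1+\lambda}M_T}]^{1-\lambda^2}$ with $M=\int_0^\cdot u^T dW$. The last expectation is bounded uniformly in $\lambda$ through Cauchy--Schwarz, $\mathbb{E}[e^{M_T/2}]\le \mathbb{E}[e^{\frac12\langle M\rangle_T}]^{1/2}$.

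Compared with the paper's bare citation, your proof makes explicit what the later KL computation relies on. Step~1 is what makes $\mathbb{Q}$ a probability measure. Step~3 gives $W=\tilde W+\int_0^\cdot u\,ds$, which is needed to rewrite $\log\frac{d\mathbb{Q}}{d\mathbb{P}}=\int_0^T u^T d\tilde W_t+\frac12\int_0^T\|u\|^2dt$. The stochastic integral in the paper's KL display has zero $\mathbb{Q}$-mean only when read against this $\mathbb{Q}$-Brownian motion $\tilde W$ (the paper's $W^{\mathbb{Q}}$), not against $W$ as written.
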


\paragraph{Derivation of the Objective:}
Let the Prior SDE (Physical Prior $\mathbb{P}$) be governed by the base degradation physics:
\begin{equation}
    dZ_t = (A(t)Z_t + b_{phy})dt + \sigma dW_t
\end{equation}
Let the Posterior SDE (Neural Control $\mathbb{Q}_\phi$) be governed by the encoded control:
\begin{equation}
    dZ_t = (A(t)Z_t + b_{phy} + B u_\phi(t))dt + \sigma dW_t^{\mathbb{Q}}
\end{equation}
Comparing the drift terms, the excess drift introduced by the neural network is $f_{diff}(t) = B u_\phi(t)$. Assuming constant diffusion $\sigma=I$ for simplicity (or scaling $u$ by $\sigma^{-1}$), the difference in drift is exactly the control signal.

The Kullback-Leibler divergence between the path measures $\mathbb{Q}_\phi$ and $\mathbb{P}$ over $[0, T]$ is given by the expectation of the log Radon-Nikodym derivative:
\begin{align}
    \mathrm{KL}(\mathbb{Q}_\phi || \mathbb{P}) &= \mathbb{E}_{\mathbb{Q}_\phi} \left[ \log \frac{d\mathbb{Q}_\phi}{d\mathbb{P}} \right] \nonumber \\
    &= \mathbb{E}_{\mathbb{Q}_\phi} \left[ \int_{0}^{T} (B u_\phi(t))^T dW_t + \frac{1}{2} \int_{0}^{T} \|B u_\phi(t)\|^2 dt \right]
\end{align}
Since the stochastic integral term $\int_0^T \phi_t dW_t$ is a martingale with zero expectation (assuming $\phi_t$ satisfies the square-integrability condition), where $\phi_t$ represents the integrand. The KL divergence simplifies to the ``Control Energy'':
\begin{equation}
    \mathrm{KL}(\mathbb{Q}_\phi || \mathbb{P}) = \mathbb{E}_{\mathbb{Q}_\phi} \left[ \int_{0}^{T} \frac{1}{2} \|B u_\phi(t)\|^2 dt \right]
\end{equation}

This proves that minimizing the $L_2$-norm of the control signal $u_\phi$ is mathematically equivalent to minimizing the KL divergence between the learned degradation process and the pure physical degradation model.

This effectively regularizes the model to deviate from the physical base degradation rate only when supported by observational evidence:

\begin{equation}
\mathcal{L}_{KL} \approx \mathbb{E}_{\mathbb{Q}_\phi} \left[ \sum_{i=0}^{N-1} \Delta \tau_i \cdot \frac{1}{2} || \mathbf{u}_\phi(\tau_i) ||_2^2 \right]
\end{equation}

\subsection{Hybrid Decoding with Terminal Penalty}

Recognizing that RUL prediction conceptually involves guiding the degradation process toward a failure state, we employ Indirect Supervision to penalize terminal deviations.

We define the RUL target $y_{\text{norm}} \in [0, 1]$. We guide the terminal state to converge toward this target via a Terminal Penalty:
\begin{equation}
\mathcal{L}_{\text{Penalty}} = || z^{(h)}_T - y_{\text{norm}} ||_2^2
\end{equation}
We introduce a Monotonicity Regularizer on the trajectory:
\begin{equation}
\mathcal{L}_{\text{Mono}} = \frac{1}{N} \sum_{i=0}^{N-1} \text{ReLU}\left( z^{(h)}_{\tau_{i+1}} - z^{(h)}_{\tau_{i}} \right)
\end{equation}
Finally, to capture complex non-linear mappings, we include an auxiliary regression head:
\begin{equation}
\hat{y}_{\text{reg}} = \text{MLP}_{dec}(\mathbf{Z}_T), \quad \mathcal{L}_{\text{Reg}} = || \hat{y}_{\text{reg}} - y_{\text{RUL}} ||_2^2
\end{equation}
The total objective function unifies data likelihood, control energy, and physical constraints:
\begin{equation}
\mathcal{L}_{\text{Total}} = -\mathcal{L}_{\text{ELBO}} + \gamma_1 \mathcal{L}_{\text{Penalty}} + \gamma_2 \mathcal{L}_{\text{Mono}} + \gamma_3 \mathcal{L}_{\text{Reg}}
\end{equation}
Substituting the terms, the final optimization problem becomes:
\begin{equation}
\min_\phi \left( \int_0^T \frac{1}{2} ||\mathbf{u}_\phi||^2 \mathrm{d}t - \sum_{i=1}^N \log \mathcal{N}(\mathbf{x}_i | \mathbf{H}\mathbf{Z}_i) + \mathcal{L}_{\text{Phy}} \right)
\end{equation}
where $\mathcal{L}_{\text{Phy}}$ encapsulates the boundary and monotonicity constraints. This formulation unifies data-driven variational inference with physics-informed boundary value problems.

\begin{table*}[t]
\centering
\caption{RMSE Performance Comparison on C-MAPSS under Varying Data Scarcity Levels.}
\label{tab:results}
\renewcommand{\arraystretch}{1.0} 

\begin{tabularx}{0.9\textwidth}{c l *{4}{>{\centering\arraybackslash}X}}
\toprule
\textbf{Scarcity} & \textbf{Method} & \textbf{FD001} & \textbf{FD002} & \textbf{FD003} & \textbf{FD004} \\
\midrule

% --- 50% Group ---
\multirow{8}{*}{50\%} 
& LSTM+GP      & 24.13 $\pm$ 0.35 & 25.09 $\pm$ 0.10 & 25.56 $\pm$ 0.29 & 27.18 $\pm$ 0.84 \\
& LSTM+MPACE   & 19.04 $\pm$ 0.10 & 21.56 $\pm$ 0.59 & 20.84 $\pm$ 0.30 & 23.64 $\pm$ 0.34 \\
& S-MFMLP & 18.65 $\pm$ 0.40 & 20.08 $\pm$ 0.73 & 20.09 $\pm$ 1.30 & 22.22 $\pm$ 0.74 \\
& Latent-ODE   & 22.46 $\pm$ 1.85 & 19.90 $\pm$ 0.85 & 20.71 $\pm$ 0.91 & 21.64 $\pm$ 0.58 \\
& Latent-SDE   & 20.57 $\pm$ 1.01 & 20.28 $\pm$ 0.59 & 21.13 $\pm$ 0.86 & 21.55 $\pm$ 0.47 \\
& PSR          & 18.65 $\pm$ 1.52 & 19.24 $\pm$ 0.41 & 20.54 $\pm$ 0.56 & 22.50 $\pm$ 0.26 \\
& ACSSM        & 18.08 $\pm$ 1.06 & 19.58 $\pm$ 2.12 & 18.72 $\pm$ 1.57 & 20.02 $\pm$ 0.27 \\ 
% \cmidrule(lr){2-6}
\rowcolor{gray!15}
& \textbf{PC-MambaSDE (Ours)} & \textbf{16.68 $\pm$ 0.80} & \textbf{18.55 $\pm$ 0.37} & \textbf{17.90 $\pm$ 0.13} & \textbf{17.11 $\pm$ 0.34} \\
\midrule

% --- 70% Group ---
\multirow{8}{*}{70\%} 
& LSTM+GP      & 31.92 $\pm$ 0.82 & 33.77 $\pm$ 0.10 & 31.79 $\pm$ 0.15 & 30.68 $\pm$ 0.75 \\
& LSTM+MPACE   & 25.31 $\pm$ 2.47 & 27.60 $\pm$ 0.44 & 25.64 $\pm$ 0.34 & 26.21 $\pm$ 0.52 \\
& S-MFMLP & 22.27 $\pm$ 0.55 & 23.79 $\pm$ 0.25 & 22.82 $\pm$ 0.58 & 25.78 $\pm$ 0.33 \\
& Latent-ODE   & 22.85 $\pm$ 0.65 & 20.87 $\pm$ 1.03 & 22.82 $\pm$ 1.78 & 23.69 $\pm$ 0.35 \\
& Latent-SDE   & 21.37 $\pm$ 0.56 & 20.49 $\pm$ 0.23 & 23.21 $\pm$ 0.60 & 23.96 $\pm$ 0.36 \\
& PSR          & 21.24 $\pm$ 0.16 & 21.24 $\pm$ 0.19 & 21.48 $\pm$ 0.43 & 25.01 $\pm$ 0.17 \\
& ACSSM        & 20.15 $\pm$ 0.22 & 19.75 $\pm$ 0.15 & 19.69 $\pm$ 0.55 & 20.90 $\pm$ 0.54 \\
% \cmidrule(lr){2-6}
\rowcolor{gray!15}
& \textbf{PC-MambaSDE (Ours)} & \textbf{18.24 $\pm$ 1.01} & \textbf{18.87 $\pm$ 0.41} & \textbf{18.42 $\pm$ 1.43} & \textbf{19.22 $\pm$ 0.19} \\
\midrule

% --- 90% Group ---
\multirow{8}{*}{90\%} 
& LSTM+GP      & 38.50 $\pm$ 2.94 & 38.78 $\pm$ 0.22 & 35.18 $\pm$ 0.83 & 42.08 $\pm$ 0.47 \\
& LSTM+MPACE   & 31.62 $\pm$ 1.99 & 31.34 $\pm$ 0.29 & 29.14 $\pm$ 0.67 & 33.50 $\pm$ 0.39 \\
& S-MFMLP & 26.13 $\pm$ 4.52 & 30.27 $\pm$ 0.20 & 25.39 $\pm$ 0.76 & 32.39 $\pm$ 0.76 \\
& Latent-ODE   & 24.15 $\pm$ 0.39 & 24.38 $\pm$ 0.39 & 25.96 $\pm$ 1.16 & 25.42 $\pm$ 1.45 \\
& Latent-SDE   & 23.76 $\pm$ 2.40 & 23.79 $\pm$ 0.14 & 25.28 $\pm$ 0.44 & 25.59 $\pm$ 0.12 \\
& PSR          & 28.36 $\pm$ 2.03 & 21.40 $\pm$ 0.10 & 27.67 $\pm$ 0.69 & 32.34 $\pm$ 0.47 \\
& ACSSM        & 22.23 $\pm$ 0.20 & 22.75 $\pm$ 0.15 & 21.24 $\pm$ 1.86 & 24.84 $\pm$ 0.15 \\
% \cmidrule(lr){2-6}
\rowcolor{gray!15}
& \textbf{PC-MambaSDE (Ours)} & \textbf{19.43 $\pm$ 1.63} & \textbf{20.24 $\pm$ 0.22} & \textbf{20.01 $\pm$ 0.19} & \textbf{22.18 $\pm$ 0.63} \\
\bottomrule
\end{tabularx}
\end{table*}

\begin{table*}[t]
\centering
\caption{Performance comparison under different irregularity settings on N-CMAPSS dataset.}
\label{tab:irregularity}

\small 
% 极限压缩列间距
\setlength{\tabcolsep}{1pt} 
\renewcommand{\arraystretch}{1.0} 

\begin{tabularx}{0.9\textwidth}{l *{8}{>{\centering\arraybackslash}X}}
\toprule
\textbf{Setting} & \textbf{LSTM+GP} & \textbf{LSTM+MPACE} & \textbf{S-MFMLP} & \textbf{Latent-ODE} & \textbf{Latent-SDE} & \textbf{PSR} & \textbf{ACSSM} & \textbf{Ours} \\
\midrule
Base & 42.94 $\pm$ 5.11 & 31.48 $\pm$ 2.44 & 20.65 $\pm$ 2.71 & 22.74 $\pm$ 0.95 & 21.13 $\pm$ 0.10 & 9.06 $\pm$ 3.88 & 8.38 $\pm$ 2.07 & \textbf{7.37 $\pm$ 0.92} \\
$\lambda_{burst}$=0.15 & 48.74 $\pm$ 0.40 & 38.64 $\pm$ 1.09 & 30.52 $\pm$ 0.41 & 35.27 $\pm$ 4.01 & 34.46 $\pm$ 3.74 & 21.25 $\pm$ 5.33 & 25.27 $\pm$ 2.60 & \textbf{16.07 $\pm$ 1.78} \\
$\lambda_{burst}$=0.3 & 49.24 $\pm$ 1.36 & 42.56 $\pm$ 2.39 & 38.93 $\pm$ 7.85 & 37.63 $\pm$ 3.23 & 36.85 $\pm$ 8.44 & 23.61 $\pm$ 3.26 & 29.52 $\pm$ 1.44 & \textbf{19.07 $\pm$ 0.49} \\
$\mu_{len}$=10 & 44.44 $\pm$ 3.50 & 34.71 $\pm$ 4.85 & 21.29 $\pm$ 5.83 & 28.55 $\pm$ 0.29 & 27.69 $\pm$ 0.31 & 10.71 $\pm$ 7.69 & 9.04 $\pm$ 3.56 & \textbf{8.08 $\pm$ 1.28} \\
$\mu_{len}$=20 & 47.09 $\pm$ 2.57 & 36.32 $\pm$ 9.79 & 23.25 $\pm$ 2.18 & 31.44 $\pm$ 0.32 & 29.63 $\pm$ 0.33 & 15.04 $\pm$ 2.24 & 12.12 $\pm$ 1.86 & \textbf{8.39 $\pm$ 1.69} \\
$\sigma_{jitter}$=0.25 & 50.87 $\pm$ 5.14 & 35.44 $\pm$ 2.29 & 20.96 $\pm$ 7.82 & 23.86 $\pm$ 1.98 & 23.00 $\pm$ 2.03 & 13.60 $\pm$ 5.41 & 15.89 $\pm$ 1.99 & \textbf{9.60 $\pm$ 0.98} \\
$\sigma_{jitter}$=0.5 & 53.70 $\pm$ 5.38 & 38.20 $\pm$ 2.44 & 21.97 $\pm$ 8.20 & 24.55 $\pm$ 3.04 & 23.62 $\pm$ 2.14 & 14.11 $\pm$ 2.62 & 19.84 $\pm$ 1.52 & \textbf{9.90 $\pm$ 1.21} \\
$\alpha_{noise}$=1.5 & 50.91 $\pm$ 5.08 & 35.57 $\pm$ 1.52 & 21.56 $\pm$ 2.36 & 23.65 $\pm$ 1.47 & 22.39 $\pm$ 2.64 & 15.27 $\pm$ 3.26 & 13.18 $\pm$ 0.94 & \textbf{12.21 $\pm$ 0.33} \\
$\alpha_{noise}$=3 & 53.50 $\pm$ 5.82 & 36.40 $\pm$ 2.54 & 21.95 $\pm$ 4.38 & 24.13 $\pm$ 2.85 & 22.56 $\pm$ 5.35 & 15.25 $\pm$ 4.71 & 14.17 $\pm$ 2.05 & \textbf{12.25 $\pm$ 1.59} \\
\bottomrule
\end{tabularx}
\end{table*}

\section{Experiments}
\subsection{Datasets and Experimental Setup}

\subsubsection{Datasets.} We utilize the C-MAPSS \cite{ref34} for standard RUL scenarios. We use N-CMAPSS \cite{ref2}, preprocessing high-frequency snapshots into cycle-aggregated trajectories via a mean-reduction strategy to retain degradation trends while reducing redundancy. Detailed preprocessing steps are provided in \textbf{Appendix~\ref{app:data}}.

\subsubsection{Baselines.} We compare PC-MambaSDE against a comprehensive suite of baselines categorized into Discrete Sequence Models and Continuous Neural Differential Equations (Neural DEs). The discrete category includes standard LSTM variants (e.g., LSTM+GP, LSTM+MPACE) and specialized sparse modeling approaches such as Sparse MFMLP (S-MFMLP) \cite{ref18} and the Parameterized Static Regression (PSR) \cite{ref19}, which are designed to handle data scarcity through imputation or regression rectification. The continuous category, representing the current state-of-the-art for irregular time series, comprises Latent-ODE \cite{ref32}, Latent-SDE \cite{ref21}, and the amortized control framework ACSSM \cite{ref26}, all of which naturally accommodate non-uniform sampling intervals by modeling the latent state evolution in continuous time.
\subsubsection{Irregularity Injection.} Unlike prior works assuming ideal sampling, to simulate the entropy of industrial IoT environments during inference, we strictly assess robustness under a realistic HIGS as detailed in \textbf{Section~\ref{sec:higs}}.

\subsubsection{Implementation Details.} All experiments were conducted using PyTorch 2.4.1 (Python 3.8) on a single NVIDIA RTX 3060 GPU (CUDA 12.4). We utilized the Adam optimizer with a learning rate of $1 \times 10^{-3}$. The models were trained for up to 50 epochs. To ensure statistical reliability, all reported results represent the average and standard deviation of five independent trials following random initialization.

\subsection{Comparative Experiments}
Tables 1-2 detail the results across various cross-domain scenarios and datasets, categorized into discrete-time baselines, continuous-time Neural ODE/SDE methods, and PC-MambaSDE. For a consistent evaluation, all models are tested under varying levels of data scarcity and irregularity settings to establish robustness bounds.

\subsubsection{Quantitative Results on C-MAPSS Dataset}
To ensure a rigorous assessment of industrial robustness, all experiments are conducted under HIGS. Table~\ref{tab:results} reports performance on the C-MAPSS benchmarks, where we simulate data scarcity by varying the asynchronous sensor dropout rate ($p_{drop} \in \{0.5, 0.7, 0.9\}$) while maintaining a base level of structural irregularity ($\mu_{len}=5$) and temporal jitter ($\sigma_{jitter}=0.1$). PC-MambaSDE consistently achieves state-of-the-art performance, particularly in the most challenging scenarios. For instance, on the FD004 subset with 90\% missingness, our method reduces RMSE to 22.18, significantly outperforming the strongest continuous baseline, ACSSM (24.84), and discrete approaches like Sparse MFMLP (32.39), which struggle to bridge large temporal gaps. Additionally, we provide a direct comparison using the specific experimental settings proposed by Cheng et al. [19] in \textbf{Appendix~\ref{app:psr}}, where our model demonstrates superior accuracy even under their static regression evaluation protocols.

\subsubsection{Quantitative Results on N-CMAPSS Dataset}
To assess scalability and resilience against specific irregularity types, Table~\ref{tab:irregularity} details performance on the N-CMAPSS dataset. Our framework achieves a state-of-the-art average RMSE of 11.44, surpassing both the static PSR (15.32) and purely data-driven differential equations. Notably, under extreme structural burst settings ($\mu_{len}=20$), where continuous segments of data are lost, PC-MambaSDE remains stable ($RMSE=8.39$) while Latent-SDE degrades significantly ($RMSE=29.63$). This validates that the Physics-Guided Drift effectively prevents the "hallucination" of invalid states during long observation gaps. Furthermore, the decoupling of the HI proves critical in high-noise environments ($\alpha=3$), filtering out non-monotonic perturbations that mislead standard SDEs.

\subsection{Ablation Study}
To understand the contribution of each architectural component in PC-MambaSDE, we conduct ablation experiments focusing on the Mask-Aware Mamba Encoder, the Parametric Physical Bias, and the Terminal Degradation Penalty. As shown in Figure \ref{fig:ablation}, removing any single module leads to a noticeable decline in performance, indicating that the full combination is critical for robustness. Specifically, while the Mamba encoder proves indispensable for capturing extended temporal dependencies in long sequences, the physical constraints are equally vital for preventing unphysical "self-healing" hallucinations, particularly in sparse data regimes. These findings show that the data-driven encoder and the physics-informed constraints capture complementary aspects of the problem, specifically temporal feature extraction and physical consistency, both of which are essential for precise RUL prediction under irregular observations.

\begin{figure}[t]
  \centering
  \includegraphics[width=0.7\linewidth, trim=6cm 1.5cm 4cm 1cm, clip]{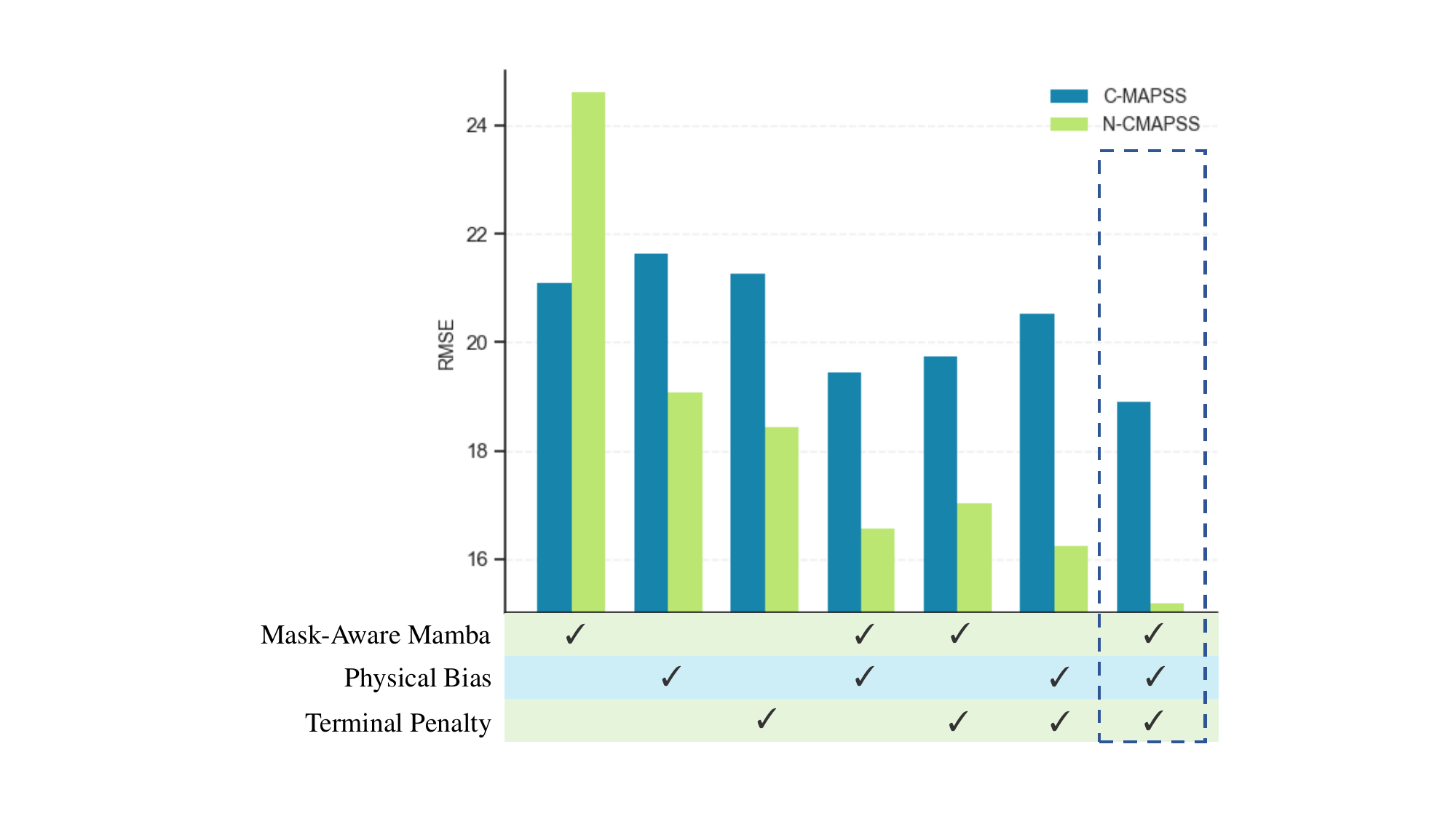}
  \caption{Ablation Study: Impact of Individual Components on RUL Prediction Performance.}
  \label{fig:ablation}
\end{figure}

\subsection{Model Analysis and Visualization}
To rigorously validate that PC-MambaSDE captures intrinsic physical dynamics rather than merely memorizing training patterns, we conduct both microscopic and macroscopic analyses of the learned latent representations, with further extensive experimental analyses provided in \textbf{Appendix~\ref{app:vis}}.
\subsubsection{Microscopic Analysis: SDE Dynamics under Data Sparsity.}
We first examine the microscopic interplay between stochastic diffusion and deterministic physical drift ($b_{phy}$) under varying data availability. As illustrated in Figure \ref{fig:sde_vis}, the model effectively balances data adaptability with physical consistency. In regions with observations, the neural drift adapts to local fluctuations; however, in observation-sparse intervals (indicated by grey regions), $b_{phy}$ acts as a gravitational constraint. This mechanism strictly suppresses unphysical ``self-healing'' hallucinations (upward trends) and enforces monotonic degradation consistent with the irreversibility of damage. Furthermore, the diffusion term naturally quantifies aleatoric uncertainty during these gaps, while the trajectory exhibits robust bidirectional recalibration upon signal restoration---pulling predictions toward the ground truth regardless of whether the physical extrapolation was overly aggressive or conservative.

\subsubsection{Macroscopic Analysis: Manifold Structure and HI Evolution.} 
Complementing the temporal analysis, we investigate the macroscopic quality of the learned representation through the HI and manifold topology. Figure \ref{fig:manifold}(a) demonstrates that the explicitly decoupled HI dimension maintains a stable, monotonic descent that correlates highly with the ground truth, confirming that the Terminal Degradation Penalty effectively filters out high-frequency sensor noise. This physical interpretability is further corroborated by the t-SNE visualization in Figure \ref{fig:manifold}(b), where latent states spontaneously organize into a distinct topological progression. Even without explicit class supervision, the manifold reveals clear separability between Healthy, Degradation, and Critical Failure stages. This structure indicates that PC-MambaSDE extracts discriminative, stage-aware features that adhere to mechanical fatigue laws, validating its robustness for early fault detection.

\begin{figure}[t]
  \centering
  \includegraphics[width=1.0\linewidth, trim=0cm 3cm 0cm 3cm, clip]{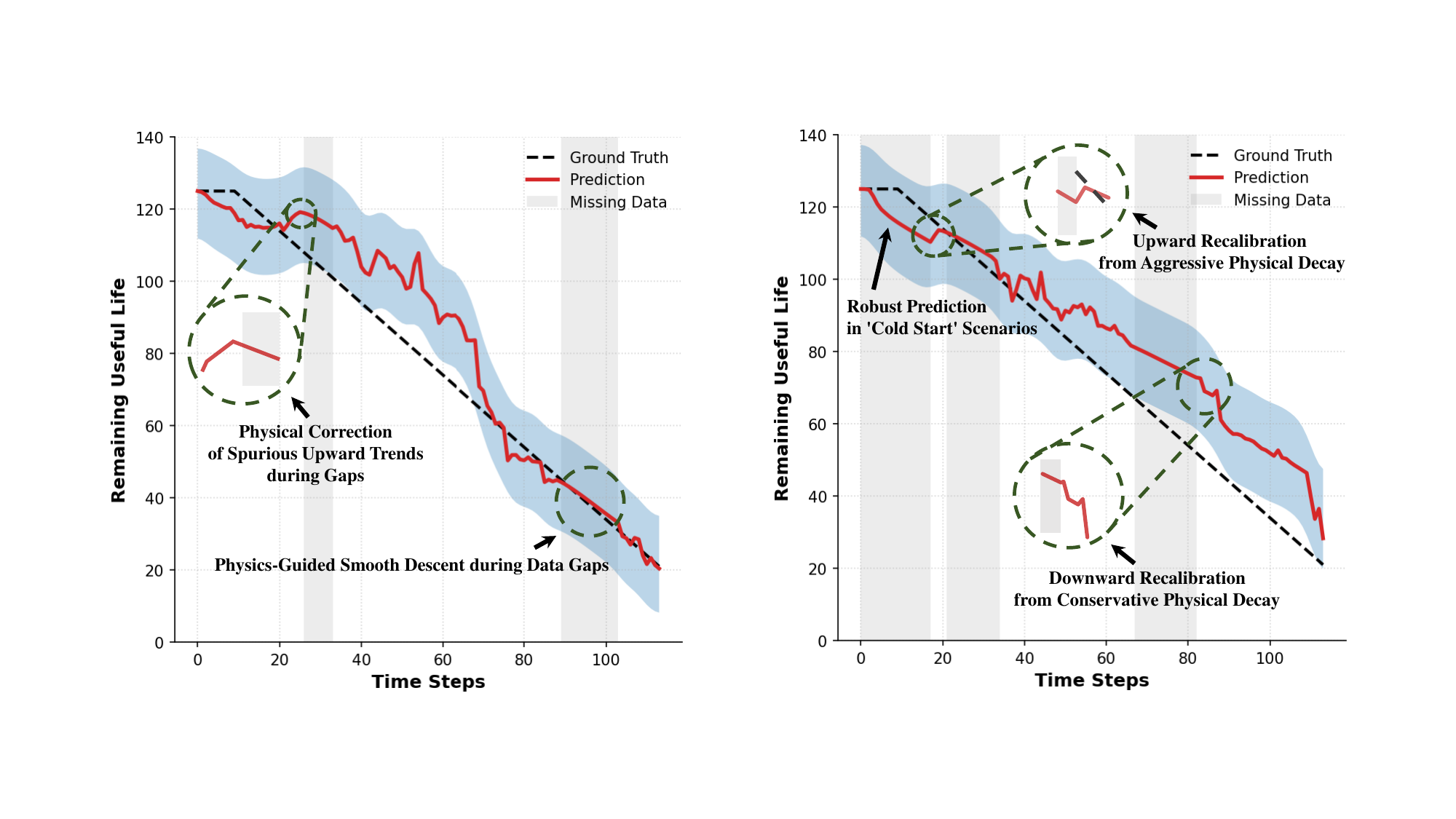}
  \caption{Visualization of Physics-Guided SDE Evolution under Uncertainty.}
  \label{fig:sde_vis}
\end{figure}

\begin{figure}[t]
    \centering  
    \begin{subfigure}[b]{0.18\textwidth}
        \centering
        \includegraphics[width=\linewidth]{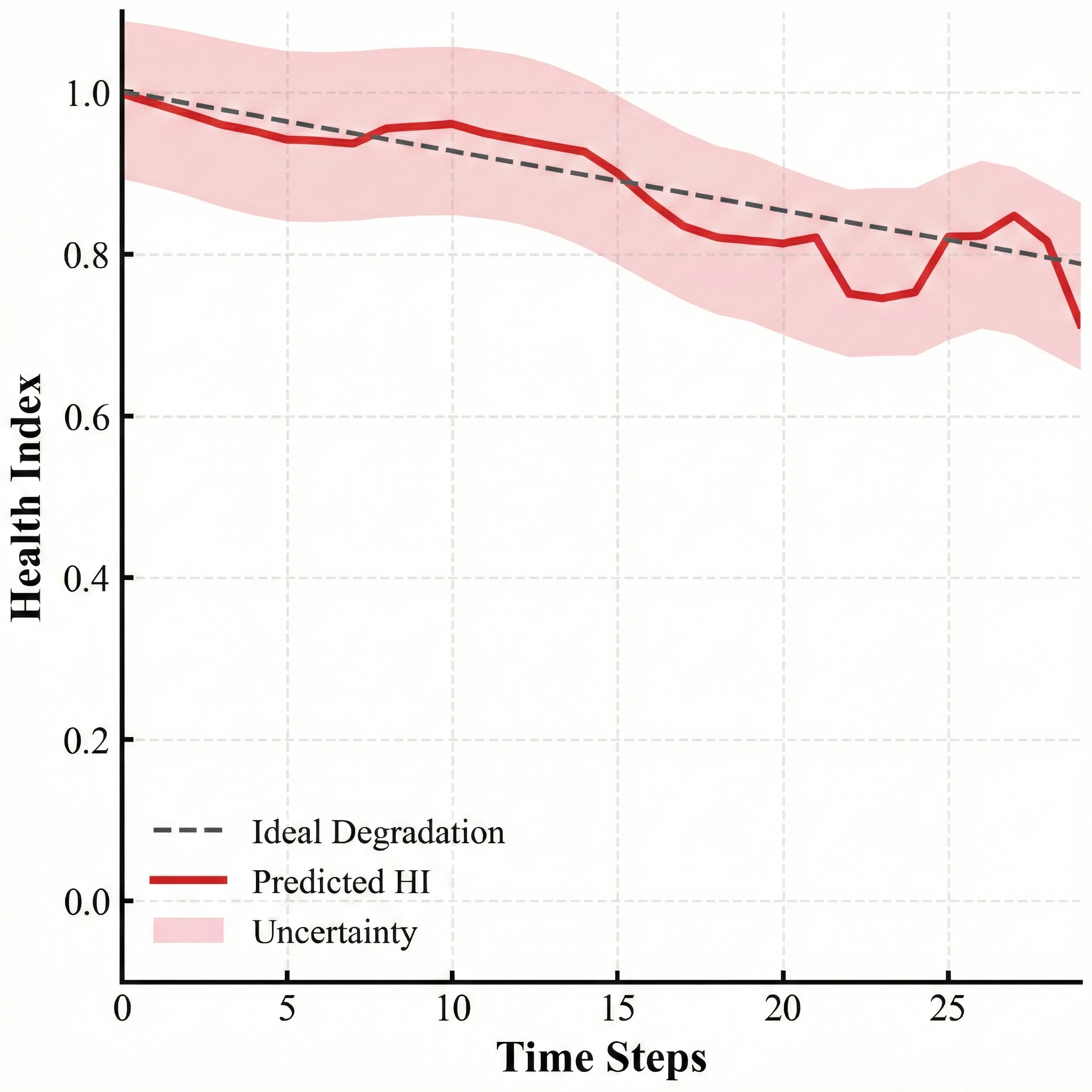}
        \caption{}
        \label{fig:4}
    \end{subfigure}
    \hspace{1.5em} 
    \begin{subfigure}[b]{0.185\textwidth}
        \centering
        \includegraphics[width=\linewidth]{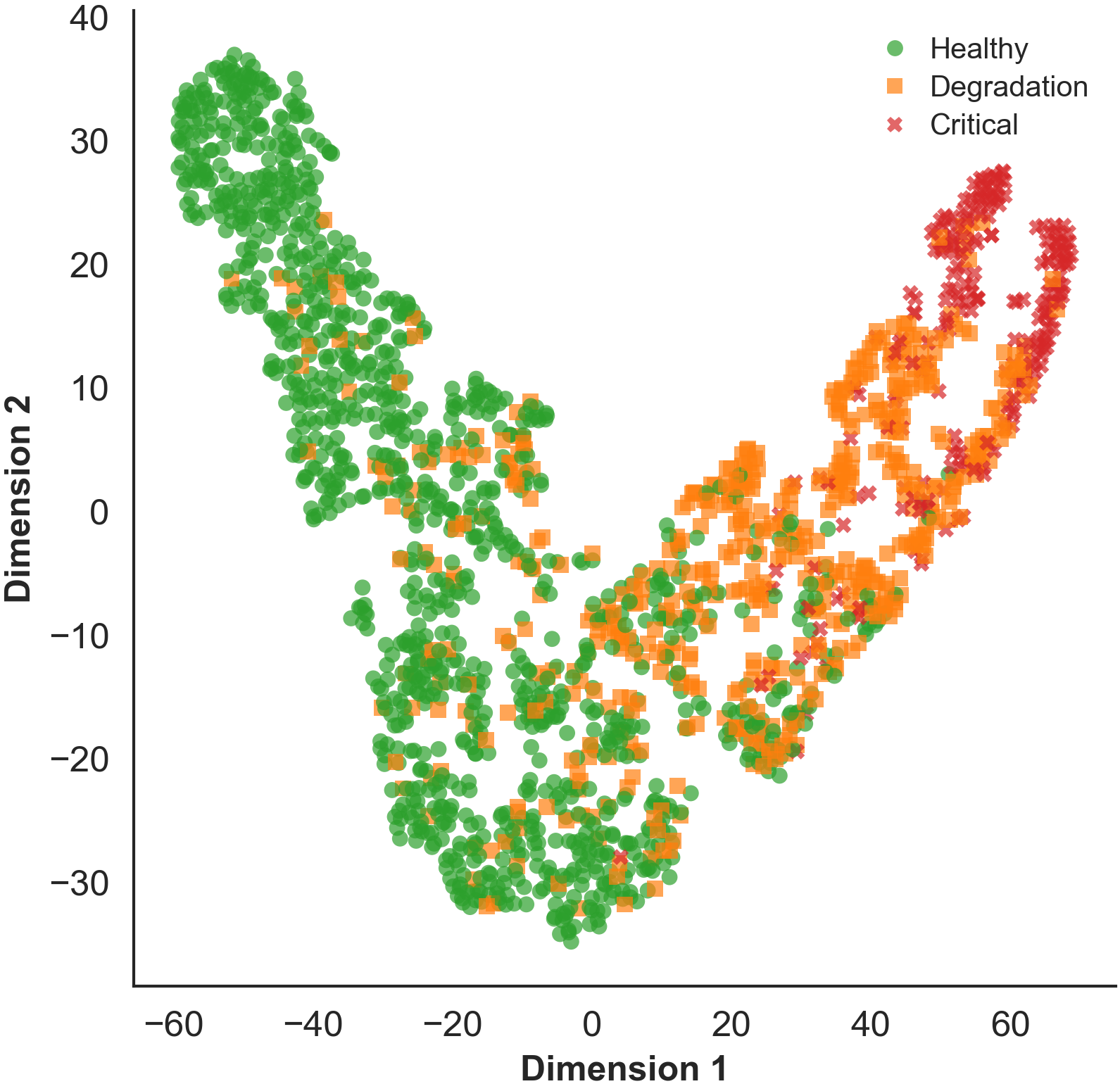}
        \caption{}
        \label{fig:5}
    \end{subfigure}
    
    \caption{Visualization of latent dynamics. (a) Evolution of the Decoupled Health Index. (b) t-SNE visualization of latent space states.}
    \label{fig:manifold}
\end{figure}

\section{Conclusion}
In this paper, we propose PC-MambaSDE, a unified continuous-time framework for RUL prediction under realistic industrial conditions characterized by high sparsity and irregularity. PC-MambaSDE bridges the gap between data-driven representation learning and physical degradation laws through three coupled components: a Mask-Aware Continuous Mamba Encoder that extracts robust control signals from sparse data, a Physics-Guided Latent SDE that superimposes a global physical bias to ensure monotonic degradation trends, and a Terminal Degradation Penalty that guides trajectories toward the failure state. Extensive experiments on C-MAPSS and N-CMAPSS benchmarks, evaluated under a specialized Hybrid Irregularity Generation Scheme, demonstrate that PC-MambaSDE significantly outperforms state-of-the-art discrete and continuous baselines. Notably, our model maintains superior predictive accuracy even in extreme scenarios with up to 90\% data missingness, validating the efficacy of embedding physical priors into continuous-time latent dynamics.

\begin{acks}
This work is supported by the National Natural Science Foun-dation of China under Grant 62136004, Grant 62276130, and Grant 62371234, in part by the National Key R\&D Programof China under Grant 2023YFF1204803, in part by the KeyResearch and Development Plan of Jiangsu Province underGrant BE2022842, ans also by the Ministry of Education, Singapore, under its MOE Tier 1 (SKI 2021\_08\_03).
\end{acks}

\appendix
%% The next two lines define the bibliography style to be used, and
%% the bibliography file.
\bibliographystyle{ACM-Reference-Format}
\balance
\bibliography{KDD_Ref}

\section{Theoretical Analysis and Proofs}
\label{app:proof}

In this section, we provide the detailed mathematical derivations for the PC-MambaSDE framework. We derive the analytical solution to the Physics-Guided Linear SDE, prove the variational bound using Girsanov's Theorem, and demonstrate the stability properties of the Basis-Decomposed Drift.

\subsection{Terminal Degradation Penalty}
\label{app:proof:Penalty}
\begin{lemma}[Monotonicity of Health Index]
Given the rectified drift for the Health Index dimension $z_t^{(h)}$:
\begin{equation}
  dz_t^{(h)} = \left( \mu_{\text{res}}(t, z_t^{(h)}) - |\lambda_{base}| \right) dt + \sigma dW_t
\end{equation}
where $\mu_{\text{res}}(t, z_t^{(h)})$ denotes the residual drift terms (excluding the base degradation rate). The expectation of the degradation rate is strictly negative bounded if the neural perturbation is bounded.
\end{lemma}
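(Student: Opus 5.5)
We make the statement precise before proving it. Its hypothesis is the assumption that the neural perturbation is bounded. We will require $\sup_{t,z}|\mu_{\text{res}}(t,z)| \le M$ for some $M<|\lambda_{base}|$; a one-sided bound $\mu_{\text{res}}\le M$ also suffices. The "degradation rate" is the drift coefficient of $z^{(h)}_t$. Its expectation is the time derivative of $m(t):=\mathbb{E}[z^{(h)}_t]$. The claim to prove is therefore
\begin{equation}
\frac{d}{dt}\,\mathbb{E}[z^{(h)}_t] \;\le\; -(|\lambda_{base}|-M) \;<\;0 .
\end{equation}
From this, monotone decrease of the mean Health Index follows immediately.

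\textbf{Step 1.} Write the integral form
\[
z^{(h)}_t - z^{(h)}_s = \int_s^t \bigl(\mu_{\text{res}}(r,z^{(h)}_r)-|\lambda_{base}|\bigr)\,dr + \sigma (W_t-W_s).
\]
Take expectations. The Itô integral of the constant $\sigma$ is a square-integrable martingale, so $\mathbb{E}[\sigma(W_t-W_s)]=0$. This is the same martingale argument used in the Girsanov derivation of the main text.

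\textbf{Step 2.} Exchange expectation and time integral by Fubini. This is justified because the integrand is bounded by $M+|\lambda_{base}|$, so no further integrability condition is needed. We obtain
\[
m(t)-m(s) = \int_s^t \bigl(\mathbb{E}[\mu_{\text{res}}(r,z^{(h)}_r)]-|\lambda_{base}|\bigr)\,dr \le -(|\lambda_{base}|-M)(t-s).
\]
Dividing by $t-s$ and letting $s\to t$ gives the displayed bound. The expected rate is thus strictly negative and bounded away from zero.

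\textbf{Step 3.} Connect the hypothesis to the actual architecture. In Equation (15), $\mu_{\text{res}} = \mathbf{a}_h(t)^\top\mathbf{z}^{(d)}_t + [\mathbf{B}\mathbf{u}_\phi(t)]_0$.

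The term $\mathbf{u}_\phi$ is a GRU output, so each coordinate lies in $(-1,1)$. Hence $|[\mathbf{B}\mathbf{u}_\phi]_0|\le \|\mathbf{B}_{0,:}\|_1$.

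The term $\mathbf{a}_h(t)$ is a convex combination of rows of the basis matrices, so it is bounded by $\max_k\|\mathbf{A}^{(k)}_{basis}\|$.

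For $\mathbf{z}^{(d)}_t$, the Asymptotic Stability theorem gives $\mathbf{Z}^\top(\mathbf{A}^\top+\mathbf{A})\mathbf{Z}\le -2\epsilon\|\mathbf{Z}\|^2$. An Itô computation on $V=\|\mathbf{Z}_t\|^2$ then gives
\[
\tfrac{d}{dt}\mathbb{E}V \le -2\epsilon\,\mathbb{E}V + 2\sqrt{\mathbb{E}V}\,C + \sigma^2 d ,
\]
where $C$ bounds $\|\mathbf{B}\mathbf{u}_\phi+\mathbf{b}_{phy}\|$. By Grönwall this yields a uniform bound on $\mathbb{E}\|\mathbf{Z}_t\|^2$, and hence on $\mathbb{E}|\mu_{\text{res}}|$.

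This bound holds only in expectation, not pathwise. It is nevertheless enough, because Step 2 only uses $\mathbb{E}[\mu_{\text{res}}]\le M$.

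\textbf{Main obstacle.} The difficulty is not the calculation but the hypothesis. Strict negativity requires $M<|\lambda_{base}|$, and nothing in the architecture enforces this: the bound $M$ from Step 3 could exceed $|\lambda_{base}|$. We will therefore state this as an explicit condition on $\mathbf{B}$, the bases, and $\lambda_{base}$, or achieve it via the KL/control-energy regularizer, which shrinks $\mathbf{u}_\phi$.

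We will also record a caveat: pathwise monotonicity fails for $\sigma>0$, since Brownian increments are non-monotone. The lemma is therefore proved in mean. The same argument gives an almost-sure linear downward trend, $z^{(h)}_t/t\to$ a negative limit, by the strong law for Brownian motion, $W_t/t\to0$.
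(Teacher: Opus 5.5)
Your Steps 1--2 correctly prove a precise version of the lemma. They differ from the paper's argument at the decisive step. Both arguments start the same way: take expectations so the It\^o term vanishes, leaving $\frac{d}{dt}\mathbb{E}[z^{(h)}_t]=\mathbb{E}[\mu_{\text{res}}]-|\lambda_{base}|$.

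The paper never uses a boundedness hypothesis. It argues that the control-energy (KL) term drives $\mathbf{u}_\phi\to 0$ in unobserved regions, then passes to the limit to obtain a rate of exactly $-|\lambda_{base}|$. This gives the limiting value and a direct link to the training objective. However, it describes what optimization tends to do rather than proving the conditional claim. It also silently drops $\mathbb{E}[\mathbf{a}_h(t)^\top\mathbf{z}^{(d)}_t]$. That term need not vanish when $\mathbf{u}_\phi$ does, because $\mathbf{b}_{\text{phy}}$ drives $z^{(h)}$ and $\mathbf{A}_{neural}$ couples $z^{(h)}$ into $\mathbf{z}^{(d)}$.

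You instead read ``bounded'' quantitatively as $\mathbb{E}[\mu_{\text{res}}]\le M<|\lambda_{base}|$. This yields a non-asymptotic margin $|\lambda_{base}|-M$ at every $t$, which is what ``strictly negative bounded'' literally asserts. The cost is a hypothesis the architecture does not enforce, as you rightly flag. The paper's KL argument is best read as a heuristic for why that hypothesis might hold.

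I have two caveats on the auxiliary parts.

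First, you import $\mathbf{Z}^\top(\mathbf{A}^\top+\mathbf{A})\mathbf{Z}\le-2\epsilon\|\mathbf{Z}\|^2$ from the stability theorem. That inequality relies on $\exp(\mathbf{D}_k)$ being positive semidefinite, which holds for symmetric $\mathbf{D}_k$ but not for a general real matrix. For example, $\exp\bigl(\begin{smallmatrix}0&-\pi\\ \pi&0\end{smallmatrix}\bigr)=-I$, which gives $\mathbf{A}_k=(1-\epsilon)I$. You do not need the inequality: on the finite horizon $[0,T]$, the bound $\|\mathbf{A}_{neural}(t)\|\le\max_k\|\mathbf{A}^{(k)}_{basis}\|$ together with Gr\"onwall already bounds $\mathbb{E}\|\mathbf{Z}_t\|^2$.

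Second, wherever the uniform-in-time bound of Step 3 does hold, it shows your hypothesis can only hold on a bounded time window. $\mathbb{E}[z^{(h)}_t]$ cannot stay bounded while decreasing at a fixed positive rate forever. The mechanism is the diagonal term $[\mathbf{A}_{neural}(t)]_{00}\,z^{(h)}_t$. It belongs to the first row of $\mathbf{A}_{neural}(t)\mathbf{Z}_t$ but is absent from both the paper's expanded HI equation and your $\mu_{\text{res}}$. When $\mathbf{A}_{neural}$ is negative definite, this term is restoring and pulls the HI back toward equilibrium.

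Consequently, your closing almost-sure remark about $z^{(h)}_t/t$ cannot apply to this model. Even under a pathwise bound it should read $\limsup_{t\to\infty}z^{(h)}_t/t\le-(|\lambda_{base}|-M)$, since the time average of $\mu_{\text{res}}$ need not converge.
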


\begin{proof}
Taking the expectation of the differential equation:
\begin{equation}
    \frac{d}{dt} \mathbb{E}[z_t^{(h)}] = \mathbb{E}[a_h(t)^T z_t^{(d)}] + \mathbb{E}[[Bu_\phi]_0] - |\lambda_{base}|
\end{equation}
The regularization term $\mathcal{L}_{KL} \propto \|u_\phi\|^2$ pushes $u_\phi \to 0$ in aleatoric regions (missing data). Consequently, as $u_\phi \to 0$, the dominant term becomes $-|\lambda_{base}|$.
\begin{equation}
    \lim_{u_\phi \to 0} \frac{d}{dt} \mathbb{E}[z_t^{(h)}] = -|\lambda_{base}| < 0
\end{equation}
This theoretically guarantees that in the absence of contradictory observational evidence (sparse regions), the model defaults to a monotonic degradation trend, satisfying the ``irreversibility of damage'' physical constraint.
\end{proof}

\subsection{Analytical Solution of the Physics-Guided Linear SDE}

\begin{proposition}[Explicit Solution to Latent Dynamics]
Given the proposed latent SDE defined as:
\begin{equation}
    dZ_t = (A(t)Z_t + B u_\phi(t) + b_{phy})dt + \sigma dW_t
\end{equation}
The explicit solution for the state $Z_t$ given an initial state $Z_{t_0}$ is:
\begin{equation}
    Z_t = \Phi(t, t_0)Z_{t_0} + \int_{t_0}^{t} \Phi(t, s) (B u_\phi(s) + b_{phy}) ds + \int_{t_0}^{t} \Phi(t, s) \sigma dW_s
\end{equation}
where $\Phi(t, s)$ is the fundamental matrix solution to the homogeneous equation $dZ_t = A(t)Z_t dt$.
\end{proposition}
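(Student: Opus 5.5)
The approach is variation of constants: multiply the SDE by the inverse fundamental matrix and apply It\^o's product rule. Since the drift is affine in $Z_t$ and the diffusion coefficient $\sigma$ does not depend on the state, no second-order It\^o correction terms appear and the computation is essentially deterministic.

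\textbf{Step 1: regularity of $\Phi$.} The coefficients $\alpha_k(t)$ come from a softmax of the encoder output $u_\phi$. We assume $u_\phi$ is piecewise continuous (it is produced by the GRU and then held or interpolated between observation times). Then $A(t)=\sum_k \alpha_k(t)A^{(k)}_{basis}$ is bounded and measurable, with $\|A(t)\|\le \max_k\|A^{(k)}_{basis}\|$, because it is a convex combination. Carath\'eodory theory then gives a unique absolutely continuous fundamental matrix $\Phi(t,s)$ satisfying
\[
\partial_t\Phi(t,s)=A(t)\Phi(t,s), \qquad \Phi(s,s)=I.
\]
It also has the following properties:
\begin{itemize}
\item Liouville's formula gives $\det\Phi(t,s)=\exp\int_s^t \mathrm{tr}A$, which is nonzero, so $\Phi$ is invertible.
\item The semigroup property $\Phi(t,s)\Phi(s,r)=\Phi(t,r)$ holds.
\item As a consequence, $\partial_s\Phi(t_0,s)=-\Phi(t_0,s)A(s)$.
\end{itemize}
For the stochastic integral to be well defined we also need $A$, $u_\phi$ and $\Phi$ to be deterministic given the observations, or at least adapted. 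We treat $u_\phi$ as determined by the encoded data, hence deterministic conditional on $\mathcal{D}$.

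\textbf{Step 2: It\^o product rule.} Set $Y_t=\Phi(t_0,t)Z_t$, which equals $\Phi(t,t_0)^{-1}Z_t$. The factor $\Phi(t_0,t)$ is of finite variation in $t$, so the quadratic covariation with $Z$ vanishes. Hence
\[
dY_t=-\Phi(t_0,t)A(t)Z_t\,dt+\Phi(t_0,t)\big[(A(t)Z_t+Bu_\phi(t)+b_{phy})\,dt+\sigma\,dW_t\big].
\]
The $A(t)Z_t$ terms cancel, leaving
\[
dY_t=\Phi(t_0,t)\big(Bu_\phi(t)+b_{phy}\big)\,dt+\Phi(t_0,t)\,\sigma\,dW_t .
\]

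\textbf{Step 3: integrate and rearrange.} Integrate from $t_0$ to $t$ and use $Y_{t_0}=Z_{t_0}$. Then multiply on the left by $\Phi(t,t_0)$, which can be moved inside both integrals because it does not depend on the integration variable. The semigroup identity $\Phi(t,t_0)\Phi(t_0,s)=\Phi(t,s)$ then yields the claimed formula.

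\textbf{Step 4: uniqueness.} Any two solutions with the same initial condition differ by a process $\Delta_t$ solving $\dot\Delta_t=A(t)\Delta_t$ with $\Delta_{t_0}=0$. By Gr\"onwall's inequality, $\Delta\equiv 0$, so the formula gives the unique strong solution. The It\^o integral in the formula is well defined because $\int_{t_0}^t\|\Phi(t,s)\sigma\|^2\,ds<\infty$, which follows from the bound on $\|\Phi\|$ in Step 1.

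\textbf{Main obstacle.} The delicate step is justifying Step 2 when $A(t)$ is only piecewise continuous and, strictly speaking, data-dependent. The plan is to carry out the computation on each interval between observation times, where $A$ is continuous. The pieces are then glued together using the semigroup property and continuity of $Z_t$. If $u_\phi$ were genuinely random and adapted rather than conditionally deterministic, I would fall back on the It\^o product rule for a continuous finite-variation process times a semimartingale, which gives the same computation.
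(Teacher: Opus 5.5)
Your argument is correct and is essentially the paper's proof. Your factor $\Phi(t_0,t)$ is the paper's integrating factor $M_t$, and both proofs apply It\^o's product rule with vanishing cross-variation (the factor has finite variation), cancel the $A(t)Z_t$ terms, integrate from $t_0$ to $t$, and multiply on the left by $\Phi(t,t_0)$. If anything, your version is cleaner: defining the factor through its ODE and the semigroup property avoids the paper's closed form $M_t=\exp\bigl(-\int_{t_0}^t A(\tau)\,d\tau\bigr)$, which holds only when the $A(t)$ commute at different times, and you correctly attribute the finite variation to the factor rather than to $Z_t$.
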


\begin{proof}
Let the drift term be $f(Z_t, t) = A(t)Z_t + \mu(t)$, where $\mu(t) = B u_\phi(t) + b_{phy}$ acts as the external forcing term (neural control + physical bias). We apply the method of integrating factors. Define the integrating factor $M_t$ such that $\frac{d}{dt}M_t = -M_t A(t)$ with $M_{t_0} = I$. The solution is $M_t = \exp(-\int_{t_0}^t A(\tau) d\tau)$.

Applying It\^o's Product Rule to $Y_t = M_t Z_t$:
\begin{equation}
    d(M_t Z_t) = (dM_t)Z_t + M_t(dZ_t) + (dM_t)(dZ_t)
\end{equation}
Since $Z_t$ has finite variation and $W_t$ does not, the cross term $(dM_t)(dZ_t)$ vanishes for the deterministic part of $dM_t$. Substituting the dynamics:
\begin{align}
    d(M_t Z_t) &= (-M_t A(t) dt) Z_t + M_t \left[ (A(t)Z_t + \mu(t)) dt + \sigma dW_t \right] \nonumber \\
    &= -M_t A(t) Z_t dt + M_t A(t) Z_t dt + M_t \mu(t) dt + M_t \sigma dW_t \nonumber \\
    &= M_t \mu(t) dt + M_t \sigma dW_t
\end{align}
Integrating from $t_0$ to $t$:
\begin{equation}
    M_t Z_t - M_{t_0} Z_{t_0} = \int_{t_0}^t M_s \mu(s) ds + \int_{t_0}^t M_s \sigma dW_s
\end{equation}
Multiplying by $M_t^{-1}$ (which corresponds to the transition matrix $\Phi(t, t_0)$) yields the proposition result. In our implementation, we discretize this integral using the matrix exponential for piecewise constant $A(t)$ over intervals $[t_i, t_{i+1})$, allowing for efficient computation via parallel associative scans.
\end{proof}

\section{Additional Experimental Results}
\label{sec:additional_experiments}

\subsection{Computational Efficiency}
\label{app:time}

\begin{table}[htbp]
\centering
\caption{Comparison of Training Time on C-MAPSS Datasets}
\label{app:tab:time}
\resizebox{\linewidth}{!}{
\begin{tabular}{lcccccccc}
\toprule
Method & LSTM+GP & LSTM+MPACE & Sparse MFMLP & Latent-ODE & Latent-SDE & PSR & ACSSM & Ours \\
\midrule
time(s) & 14557 & 7405 & 15832 & 19890 & 20344 & 3621 & 12058 & 9344 \\
\bottomrule
\end{tabular}
}
\end{table}

Table~\ref{app:tab:time} compares training times for convergence on the C-MAPSS dataset. PC-MambaSDE (9,344s) halves the computational cost of continuous-time baselines like Latent-SDE (20,344s) and achieves a 22.5\% reduction versus ACSSM (12,058s). This validates the efficiency of our linear-complexity time-domain Mask-Aware Mamba encoder over quadratic attention mechanisms, confirming our framework offers a scalable solution for industrial monitoring without compromising accuracy.

\subsection{Comparison under PSR Settings}
\label{app:psr}

\begin{table*}[t]
\centering
\caption{Performance comparison on C-MAPSS datasets under different scarcity levels.}
\label{app:tab:scarcity}

\small
\renewcommand{\arraystretch}{1.2} 
\begin{tabularx}{0.9\textwidth}{c l *{4}{>{\centering\arraybackslash}X}}
\toprule
\textbf{Scarcity} & \textbf{Method} & \textbf{FD001} & \textbf{FD002} & \textbf{FD003} & \textbf{FD004} \\
\midrule
\multirow{5}{*}{50\%} 
& LSTM+GP       & 26.45 & 24.76 & 26.43 & 25.26 \\
& LSTM+MPACE    & 17.13 & 18.52 & 19.49 & 20.14 \\
& Sparse MFMLP  & 15.02 & 16.98 & 14.41 & 17.19 \\
& PSR           & 16.74 & 16.59 & 18.91 & 18.57 \\
\rowcolor{gray!15}
& \textbf{PC-MambaSDE (Ours)} & \textbf{12.29 $\pm$ 0.43} & \textbf{11.74 $\pm$ 0.87} & \textbf{13.71 $\pm$ 0.67} & \textbf{15.51 $\pm$ 0.14} \\
\midrule
\multirow{5}{*}{70\%} 
& LSTM+GP       & 29.98 & 30.28 & 30.42 & 28.91 \\
& LSTM+MPACE    & 17.48 & 18.36 & 20.09 & 21.56 \\
& Sparse MFMLP  & 16.85 & 17.08 & 16.06 & 18.18 \\
& PSR           & 16.82 & 16.04 & 18.80 & 18.79 \\
\rowcolor{gray!15}
& \textbf{PC-MambaSDE (Ours)} & \textbf{12.99 $\pm$ 0.51} & \textbf{15.15 $\pm$ 0.58} & \textbf{13.43 $\pm$ 0.34} & \textbf{16.12 $\pm$ 0.78} \\
\midrule
\multirow{5}{*}{90\%} 
& LSTM+GP       & 34.22 & 36.78 & 35.89 & 34.33 \\
& LSTM+MPACE    & 21.85 & 23.10 & 21.10 & 23.54 \\
& Sparse MFMLP  & 20.27 & 18.67 & 19.63 & \textbf{19.03} \\
& PSR           & 19.58 & \textbf{15.89} & 20.45 & 19.92 \\
\rowcolor{gray!15}
& \textbf{PC-MambaSDE (Ours)} & \textbf{16.05 $\pm$ 0.93} & \textbf{18.18 $\pm$ 0.29} & \textbf{17.18 $\pm$ 0.05} & {19.78} $\pm$ 1.2 \\
\bottomrule
\end{tabularx}
\end{table*}

To ensure a fair comparison with the Parameterized Static Regression (PSR) method \cite{ref19}, we evaluated PC-MambaSDE using the specific scarcity and irregularity settings defined in their work. As shown in Table~\ref{app:tab:scarcity}, our method outperforms PSR and Sparse MFMLP across all scarcity levels (50\%, 70\%, 90\%), achieving the lowest RMSE on every subset. This confirms that modeling continuous-time dynamics yields superior results compared to static regression, even when evaluated under protocols originally designed for static models.

\subsection{Microscopic Analysis of Latent Dynamics}
\label{app:vis}

\begin{figure}[t]
  \centering
  \includegraphics[width=1.\linewidth]{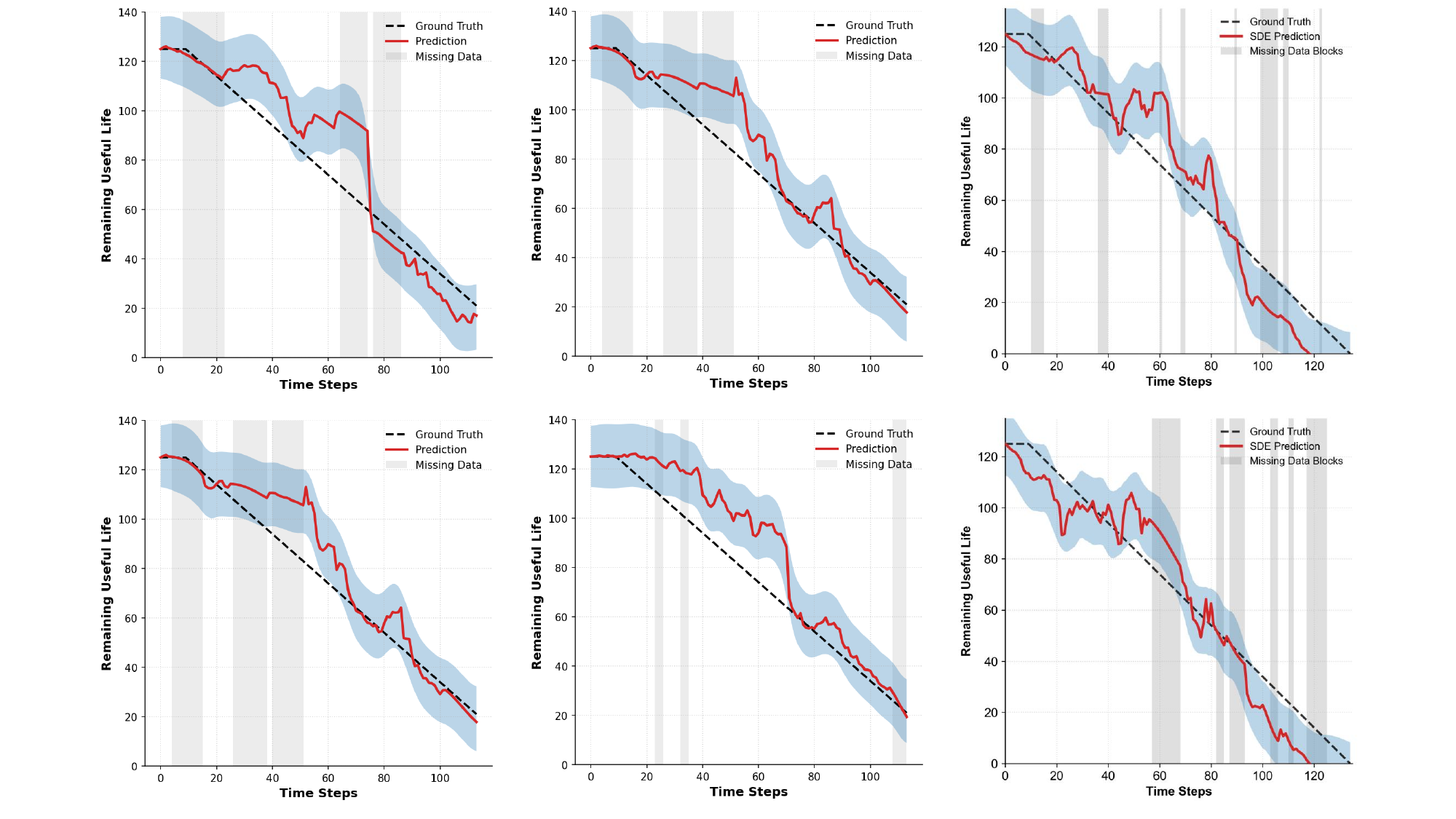}
  \caption{Visualization of Physics-Guided SDE Evolution.}
  \label{app:fig:vis1}
\end{figure}

\begin{figure}[t]
  \centering
  \includegraphics[width=1.0\linewidth, trim=1cm 2cm 0cm 1cm, clip]{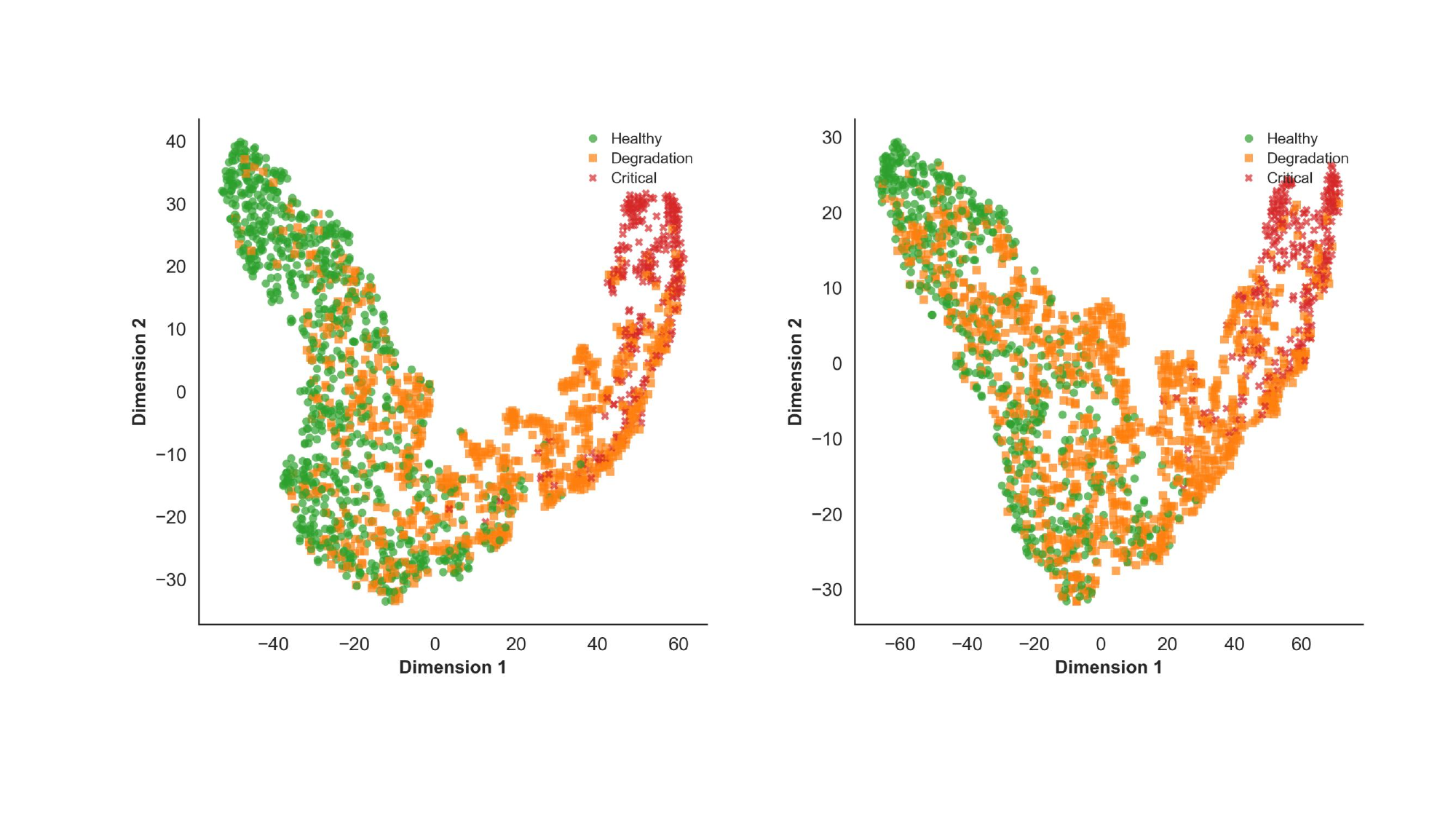}
  \caption{Topological Analysis of the Latent Degradation Manifold.}
  \label{app:fig:vis3}
\end{figure}

Figure~\ref{app:fig:vis1} illustrates how the model leverages the SDE diffusion term to capture uncertainty while employing the Physical Drift to guarantee adherence to physical laws. The grey background bars mark intervals of data missingness ($Mask=0$) where the model relies exclusively on ODE/SDE evolution. A critical observation is the model's behavior when the data-driven prediction erroneously deviates upward from the ground truth—implying non-physical self-healing—just before entering a missing interval; the onset of data missingness triggers an immediate correction driven by the physical bias, causing the trajectory to terminate the upward trend and descend smoothly within the shaded region. The model also demonstrates robust behavior in ``Cold Start'' scenarios where data is missing from the beginning, as the physical bias effectively simulates the degradation trend, and a bidirectional recalibration mechanism is observed upon data restoration: if the physics-guided extrapolation is too aggressive, re-emerging data pulls the prediction upward toward the ground truth, whereas if the physical decay is too conservative, the data drives it further down. This mechanism proves effective even in scattered missing intervals, ensuring consistent performance. Complementing this, Figure~\ref{app:fig:vis3} investigates the sensitivity of the learned representation by visualizing the latent manifold under stricter ``Healthy'' stage definitions (RUL > 100 and RUL > 125). Even under these rigorous thresholds, the topological structure remains distinct, preserving clear separability between Healthy, Degradation, and Critical stages. This persistence of the manifold structure confirms that PC-MambaSDE captures the intrinsic, continuous nature of degradation physics rather than overfitting to specific labeling criteria, thereby validating its robustness for early fault detection tasks.

\section{Dataset Description} \label{app:data}

To rigorously evaluate the proposed framework under realistic industrial conditions, we conduct experiments on two standard predictive maintenance benchmarks: C-MAPSS and N-CMAPSS. Furthermore, to address the challenge of irregular sampling, we subject these datasets to a specialized Hybrid Irregularity Generation Scheme.

\subsection{C-MAPSS Benchmark}
C-MAPSS Benchmark We utilize the canonical Commercial Modular Aero-Propulsion System Simulation (C-MAPSS) benchmark \cite{ref34}, developed by NASA. This dataset simulates the thermodynamic degradation of a commercial turbofan engine under varying fault modes and operating conditions. It comprises four sub-datasets (FD001–FD004):

FD001 \& FD003. Simulate a single operating condition, with FD001 featuring one fault mode and FD003 featuring two.

FD002 \& FD004. Represent complex environments with six distinct operating conditions. FD004 is the most challenging subset, combining multiple fault modes with complex operating conditions.

All engines operate under normal conditions initially and degrade continuously until failure; we strictly adhere to the standard train/test splits provided by the repository.

\subsection{N-CMAPSS Benchmark (Mean-Aggregated)}
To evaluate scalability, we employ the New C-MAPSS (N-CMAPSS) benchmark \cite{ref2} , which provides high-fidelity run-to-failure trajectories reflecting real flight conditions. Unlike the aggregated C-MAPSS, N-CMAPSS records high-frequency sensor snapshots. To handle this effectively, we adopt the preprocessing strategy proposed by Cheng et al. \cite{ref19}. Specifically, we aggregate sensor readings within each flight cycle using a mean-reduction strategy to generate cycle-aggregated trajectories. This transformation preserves the degradation trend while aligning the data scale with standard RUL prediction tasks and significantly reducing computational overhead. The evaluation covers subsets DS01 through DS08c.

\subsection{Hybrid Irregularity Generation Scheme (HIGS)}
We introduce HIGS to transform pristine datasets into irregular sequences via four stochastic perturbation mechanisms:

\textit{Asynchronous Sensor Dropout ($p_{drop}$).} Simulates random packet loss by applying an independent Bernoulli mask to each sensor channel.

\textit{Structural Burst Missingness ($\lambda_{burst}, \mu_{len}$).} Models systematic failures (e.g., network outages) using a Poisson process to generate continuous segments of data loss.

\textit{Temporal Jitter ($\sigma_{jitter}$).} Simulates clock desynchronization errors by perturbing timestamps with Gaussian noise while enforcing monotonicity.

\textit{State-Dependent Noise ($\alpha_{noise}$).} Introduces noise variance that scales dynamically with the degradation level (Health Index), mimicking signal-to-noise ratio decay.

Unless otherwise stated, the ``Base'' setting utilizes $p_{drop}=0.5$, $\lambda_{burst}=0.05$, $\mu_{len}=5$, and $\sigma_{jitter}=0.1$.

\end{document}